\newtheorem{theorem}{Theorem}
\newtheorem{lemma}{Lemma}
\newtheorem{remark}{Remark}
\newtheorem{corollary}{Corollary}
\title{DR-DSGD: A Distributionally Robust Decentralized\\ Learning Algorithm over Graphs}
\author{Chaouki Ben Issaid\thanks{University of Oulu, correspondent author: chaouki.benissaid@oulu.fi}, Anis Elgabli\thanks{University of Oulu}, and Mehdi Bennis\thanks{University of Oulu}} 
\begin{document}

\maketitle

\begin{abstract}
In this paper, we propose to solve a regularized distributionally robust learning problem in the decentralized setting, taking into account the data distribution shift. By adding a Kullback-Liebler regularization function to the robust min-max optimization problem, the learning problem can be reduced to a modified robust minimization problem and solved efficiently. Leveraging the newly formulated optimization problem, we propose a robust version of Decentralized Stochastic Gradient Descent (DSGD), coined Distributionally Robust Decentralized Stochastic Gradient Descent (DR-DSGD). Under some mild assumptions and provided that the regularization parameter is larger than one, we theoretically prove that DR-DSGD achieves a convergence rate of  $\mathcal{O}\left(1/\sqrt{KT} + K/T\right)$, where $K$ is the number of devices and $T$ is the number of iterations. Simulation results show that our proposed algorithm can improve the worst distribution test accuracy by up to $10\%$. Moreover, DR-DSGD is more communication-efficient than DSGD since it requires fewer communication rounds (up to $20$ times less) to achieve the same worst distribution test accuracy target. Furthermore, the conducted experiments reveal that DR-DSGD results in a fairer performance across devices in terms of test accuracy.
\end{abstract}

\section{Introduction}\label{introduction}
Federated learning (FL) is a learning framework that allows the training of a model across multiple devices under the orchestration of a parameter server (PS). Unlike the traditional way of training ML models, where the individual data of the devices are shared with the PS, FL hides the raw data, so it can significantly reduce the communication cost. When combined with a privacy-preservation mechanism, it further ensures privacy. Training using FedAvg presents several challenges that need to be tackled. It often fails to address the data heterogeneity issue. Though many enhanced variants of federated averaging (FedAvg) \citep{li2020federated,liang2019variance,karimireddy2020scaffold,karimireddy2020mime,wang2020tackling,Mitra2021,horvath2022fedshuffle} were proposed to solve this issue in the mean risk minimization setting, local data distributions might differ greatly from the average distribution. Therefore, a considerable drop in the global model performance on local data is seen, suggesting that the mean risk minimization may not be the right objective. Another major issue in FL is fairness. In many cases, the resultant learning models are biased or unfair in the sense that they discriminate against certain device groups \citep{hardt2016equality}. Finally, FL relies on the existence of a PS to collect and distribute model parameters, which is not always feasible or even accessible to devices that are located far away. 

Even though several FL algorithms  \citep{kairouz2019, yang2019, li2020} have been proposed for the distributed learning problem, FedAvg-style methods \citep{fedavg,li2020federated,wang2020tackling} remain the state-of-the-art algorithm. Specifically, FedAvg entails performing one or multiple local iterations at each device before communicating with the PS, which in turn performs periodic averaging. However, because FedAvg is based on the empirical risk minimization (ERM) to solve the distributed learning problem, i.e. FedAvg minimizes the empirical distribution of the local losses, its performance deteriorates when the local data are distributed non-identically across devices. While the ERM formulation assumes that all local data come from the same distribution, local data distributions might significantly diverge from the average distribution. As a result, even though the global model has a good average test accuracy, its performance locally drops when the local data are heterogeneous. In fact, increasing the diversity of local data distributions has been shown to reduce the generalization ability of the global model derived by solving the distributed learning problem using FedAvg \citep{Li2020On, sahu2018convergence, zhao2018federated}. 

While there are many definitions of robustness, the focus of our work is on distributionally robustness, that is, being robust to the distribution shift of the local data distributions. We consider a distributionally robust learning perspective by seeking the best solution for the worst-case distribution. Another key focus of this work is to investigate the fairness of the performance across the different devices participating in the learning. Fairness aims to reduce the difference in performance on the local datasets to ensure that the model performance is uniform across the devices participating in the learning process. In the FL context, achieving fair performance among devices is a critical challenge. In fact, existing techniques in FL, such as FedAvg \citep{fedavg} lead to non-uniform performance across the network, especially for large networks, since they favour or hurt the model performance on certain devices. While the average performance is high, these techniques do not ensure a uniform performance across devices. 

PS-based learning (star topology) incurs a significant bottleneck in terms of communication latency, scalability, bandwidth, and fault tolerance. Decentralized topologies circumvent these limitations and hence have significantly greater scalability to larger datasets and systems. In fact, while the communication cost increases with the number of devices in the PS-based topology, it is generally constant (in a ring or torus topology), or a slowly increasing function in the number of devices since decentralizing learning only requires on-device computation and local communication with neighboring devices without the need of a PS. Several works investigated the decentralizing learning problem \citep{yuan2016convergence,zeng2018nonconvex,wang2019matcha,wei2012distributed,shi2014linear,benissaid2021, duchi2011dual}; however, while interesting none of these works has considered solving the decentralized learning problem in a distributionally robust manner.
\paragraph{Summary of Contributions.}
The main contributions of this paper are summarized as follows
\begin{itemize}
    \item We propose a \textbf{distributionally robust learning algorithm}, dubbed Distributionally Robust Decentralized Stochastic Gradient Descent (DR-DSGD), that solves the learning problem in a decentralized manner while being robust to data distribution shift. To the best of our knowledge, our framework is the first to solve the regularized distributionally robust optimization problem in a decentralized topology\footnote{After acceptance of the paper, the work \citep{zecchin2022communication} was brought to our attention, where the authors propose to solve a decentralized distributionally robust optimization problem using a single loop gradient descent/ascent with compressed consensus scheme.}.
    \item Under some mild assumptions and provided that the regularization parameter is larger than one, we prove that DR-DSGD achieves a fast convergence rate of $\mathcal{O}\left(\frac{1}{\sqrt{KT}} + \frac{K}{T}\right)$, where $K$ is the number of devices and $T$ is the number of iterations, as shown in Corollary \ref{corollary}. Note that, unlike existing FL frameworks that rely on the \textbf{unbiasedness} of the stochastic gradients, our analysis is more challenging and different from the traditional analyses for decentralized SGD since it involves \textbf{biased} stochastic gradients stemming from the compositional nature of the reformulated loss function.
    \item We demonstrate the robustness of our approach compared to vanilla decentralized SGD via numerical simulations. It is shown that DR-DSGD leads to an improvement of up to $10\%$ in the worst distribution test accuracy while achieving a reduction of up to $20$ times less in terms of communication rounds. 
    \item Furthermore, we show by simulations that DR-DSGD leads to a fairer performance across devices in terms of test accuracy. In fact, our proposed algorithm reduces the variance of test accuracies across all devices by up to $60\%$ while maintaining the same average accuracy.
\end{itemize}
\paragraph{Paper Organization.}
The remainder of this paper is organized as follows. In Section \ref{probForm}, we briefly describe the problem formulation and show the difference between the ERM and distributionally robust optimization (DRO) formulation. Then, we present our proposed framework, \emph{DR-DSGD}, for solving the decentralized learning problem in a distributionally robust manner in Section \ref{propsol}. In Section \ref{convergence}, we prove the convergence of DR-DSGD theoretically under some mild conditions. Section \ref{eval} validates the performance of DR-DSGD by simulations and shows the robustness of our proposed approach compared to DSGD. Finally, the paper concludes with some final remarks in Section \ref{SecConc}. The details of the proofs of our results are deferred to the appendices.

\section{Related Works}
\paragraph{Robust Federated Learning.} 
Recent robust FL algorithms \citep{mohri2019agnostic,reisizadeh2020robust,deng2021distributionally, fedboost} have been proposed for the learning problem in the PS-based topology. Instead of minimizing the loss with respect to the average distribution among the data distributions from local clients, the authors in \citep{mohri2019agnostic} proposed agnostic federated learning (AFL), which optimizes the global model for a target distribution formed by any mixture of the devices' distributions. Specifically, AFL casts the FL problem into a min-max optimization problem and finds the worst loss over all possible convex combinations of the devices' distributions. \citet{reisizadeh2020robust} proposed FedRobust, a variant of local stochastic gradient descent ascent (SGDA), aiming to learn a model for the worst-case affine shift by assuming that a device's data distribution is an affine transformation of a global one. However, FedRobust requires each client to have enough data to estimate the local worst-case shift; otherwise, the global model performance on the worst distribution deteriorates. The authors in \citep{deng2021distributionally} proposed a distributionally robust federated averaging (DRFA) algorithm with reduced communication. Instead of using the ERM formulation, the authors adopt a DRO objective by formulating a distributed learning problem to minimize a distributionally robust empirical loss, while periodically averaging the local models as done in FedAvg \citep{fedavg}. Using the Bregman Divergence as the loss function, the authors in \citep{fedboost} proposed FedBoost, a communication-efficient FL algorithm based on learning the optimal mixture weights on an ensemble of pre-trained models by communicating only a subset of the models to any device. The work of \citep{pillutla2019robust} tackles the problem of robustness to corrupted updates by applying robust aggregation based on the geometric median. Robustness to data and model poisoning attacks was also investigated by \citep{li2021ditto}. DRO has been applied in multi-regional power systems to improve reliability by considering the uncertainty of wind power distributions and constructing a multi-objective function that maintains a trade-off between the operational cost and risk \citep{Lipeng2020, Hu2021}. However, none of these works provides any convergence guarantees. Furthermore, our analysis is different from these works since it involves biased stochastic gradients.
\paragraph{Fairness in Federated Learning.} Recently, there has been a growing interest in developing FL algorithms that guarantee fairness across devices \citep{mohri2019agnostic, qffl,term}. Inspired by works in fair resource allocation for wireless networks, the authors in \citep{qffl} proposed $q$-FFL, an FL algorithm that addresses fairness issues by minimizing an average reweighted loss parameterized by $q$. The proposed algorithm assigns larger weights to devices with higher losses to achieve a uniform performance across devices. Tilted empirical risk minimization (TERM), proposed in \citep{term}, has a similar goal as  $q$-FFL, i.e. to achieve fairer accuracy distributions among devices while ensuring similar average performance. 
\paragraph{Decentralized Learning.}
Decentralized optimization finds applications in various areas including wireless sensor networks \citep{mihaylov2009decentralized,avci2018wireless,soret2021learning}, networked multi-agent systems \citep{inalhan2002decentralized,ren2007information,johansson2008distributed}, as well as the area of smart grid implementations \citep{kekatos2012distributed}. Several popular algorithms based on gradient descent \citep{yuan2016convergence,zeng2018nonconvex,wang2019matcha}, alternating direction method of multipliers (ADMM) \citep{wei2012distributed,shi2014linear,benissaid2021}, or dual averaging \citep{duchi2011dual} have been proposed to tackle the decentralized learning problem. 
\section{Notations \& Problem Formulation}\label{probForm} 
\subsection{Notations}
Throughout the whole paper, we use bold font for vectors and matrices. The notation $\nabla f$ stands for the gradient of the function $f$, and $\mathbb{E}[\cdot]$ denotes the expectation operator. The symbols $\|\cdot\|$, and $\|\cdot\|_F$  denote the $\ell_2$-norm of a vector, and the Frobenius norm of a matrix, respectively. For a positive integer number $n$, we write $[n] \triangleq \{1, 2,\dots, n\}$. The set of vectors of size $K$ with all entries being positive is denoted by $\mathbb{R}_{+}^K$. The notations $\bm{0}$ and $\bm{1}$ denote a vector with all entries equal to zero, or one, respectively (its size is to be understood from the context). Furthermore, we define the matrices: $\bm{I}$ the identity matrix and $\bm{J} = \frac{1}{K} \bm{1} \bm{1}^T$. For a square matrix $\bm{A}$, $\text{Tr}(\bm{A})$ is the trace of $\bm{A}$, i.e. the sum of elements on the main diagonal. Finally, for the limiting behavior of functions, $f=\mathcal{O}(g)$ means that $f$ is bounded above up to a constant factor by $g$, asymptotically. 
\subsection{Problem Formulation}
We consider a connected network consisting of a set $\mathcal{V}$ of $K$ devices. Each device $i \in [K]$ has its data distribution $\mathcal{D}_i$ supported on domain $\Xi_i:= (\mathcal{X}_i,\mathcal{Y}_i)$. The connectivity among devices is represented as an undirected connected communication graph $\mathcal{G}$ having the set $\mathcal{E} \subseteq \mathcal{V} \times \mathcal{V}$ of edges, as illustrated in Fig. \ref{decentralized}. The set of neighbors of device $i$ is defined as $\mathcal{N}_i = \{ j | (i, j) \in \mathcal{E} \}$ whose cardinality is $|\mathcal{N}_i| = d_{i}$. Note that $(i, j) \in \mathcal{E}$ if and only if devices $i$ and $j$ are connected by a communication link; in other words, these devices can exchange information directly. All devices collaborate to solve the optimization problem given by
\begin{align}\label{eq0}
\min_{\bm{\Theta} \in  \mathbb{R}^{d}} \sum_{i=1}^K \frac{n_i}{n} f_i(\bm{\Theta}) \quad \text{where} \quad f_i(\bm{\Theta}) = \mathbb{E}_{\xi_i \sim \mathcal{D}_i}[\ell(\bm{\Theta};\xi_i)],
\end{align}
where $\xi_i:=(x_i,y_i)$ denotes the set of features $x_i$ and labels $y_i$ of device $i$. The function $\ell(\bm{\Theta}, \xi_i)$ is the cost of predicting $y_i$ from $x_i$, where $\bm{\Theta}$ denotes the model parameters, e.g., the weights/biases of a neural network. Here, $n_i$ denotes the number of training examples drawn from $\mathcal{D}_i$ and $n = \sum_{i=1}^K n_i$ is the total number of examples. 

Without loss of generality, we assume in the remainder that all devices have the same number of samples, and therefore $n_i/n = 1/K$, $\forall i \in [K]$. In this case, problem (\ref{eq0}) writes as
\begin{align}\label{eq00}
&\min_{\bm{\Theta} \in  \mathbb{R}^{d}} \frac{1}{K} \sum_{i=1}^K f_i(\bm{\Theta}).
\end{align}

\begin{figure}[t]
		\centering
		\includegraphics[scale=0.25]{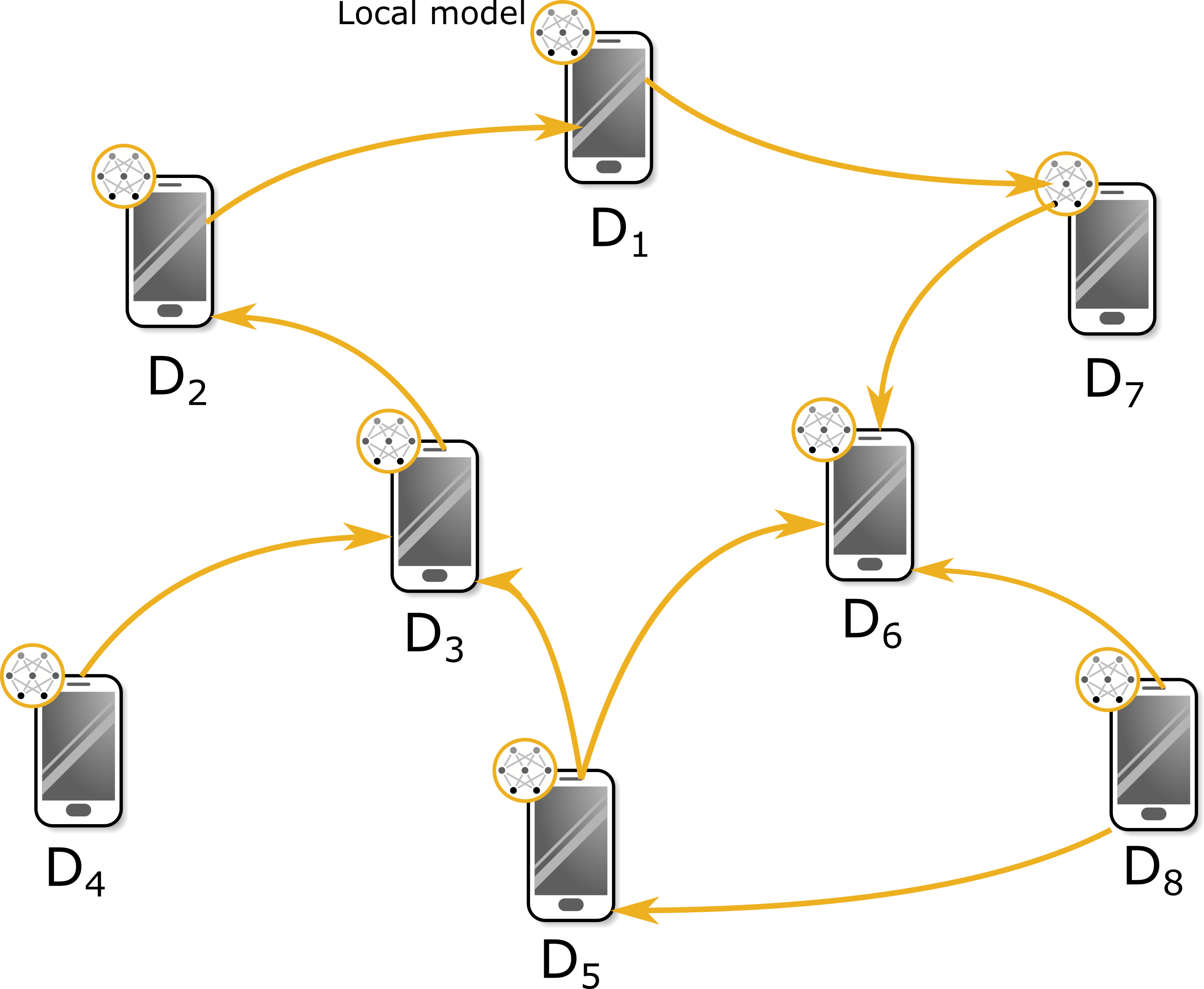}
    \caption{Illustration of a graph topology ($K=8$) and the interactions between the devices ($D_1-D_8$).}
    \label{decentralized}  
\end{figure}

\begin{algorithm}[b]
\renewcommand{\algorithmiccomment}[1]{#1}
	\caption{\textsc{Vanilla Decentralized SGD (DSGD)}}
	\label{dsgd}
	\begin{algorithmic}[1]
		\FOR[{{\it in parallel for all devices $i \in [K]$}}]{$t$ \textbf{in} $0,\dots, T-1$}
		\STATE Sample a mini-batch of size $\{\xi_j^{t}\}_{j=1}^B$, compute gradient $\bm{g}_i(\bm{\theta}_i^{t}) := \frac{1}{B} \sum_{j=1}^B \nabla \ell(\bm{\theta}_i^{t}, \xi_j^{t})$
		\STATE $\bm{\theta}_i^{t + \frac{1}{2}} := \bm{\theta}_i^{t} - \eta \bm{g}_{i}(\bm{\theta}_i^{t})$
		\STATE Send $\bm{\theta}_i^{t + \frac{1}{2}}$ to neighbors
		\STATE $\bm{\theta}_i^{t + 1} := \sum_{j = 1}^{K} W_{ij} \bm{\theta}_j^{t + \frac{1}{2}}$
		\ENDFOR
	\end{algorithmic}
\end{algorithm}

One way to solve (\ref{eq0}) in a decentralized way is to use vanilla decentralized SGD (DSGD) \citep{yuan2016convergence}. As shown in Algorithm \ref{dsgd}, each device in DSGD performs two steps: (i) a local stochastic gradient update (Line 3) using the learning rate $\eta$, and (ii) a consensus operation in which it averages its model with its neighbors' models (Lines 4-5) using the weights of the connectivity (mixing) matrix of the network $\bm{W} = [W_{ij}] \in \mathbb{R}^{K\times K}$. The mixing matrix $\bm{W}$ is often assumed to be a symmetric ($\bm{W} =\bm{W}^T$) and doubly stochastic ($\bm{W} \bm{1} = \bm{1}, \bm{1}^T \bm{W}  =\bm{1}^T$) matrix, such that $W_{ij} \in [0,1]$, and if $(i,j) \notin \mathcal{E}$, then $W_{ij} = 0$. Assuming $\bm{W}$ to be symmetric and doubly stochastic is crucial
to ensure that the devices achieve consensus in terms of converging to the same stationary point.

While formulating problem (\ref{eq00}), we assume that the target distribution is given by
\begin{align}
\overline{\mathcal{D}} = \frac{1}{K} \sum_{i=1}^K \mathcal{D}_i.  
\end{align}
However, the heterogeneity of local data owned by the devices involved in the learning presents a significant challenge in the FL setting. In fact, models resulting from solving (\ref{eq00}) lack robustness to distribution shifts and are vulnerable to adversarial attacks \citep{bhagoji2019analyzing}.
This is mainly due to the fact that the target distribution may be significantly different from $\overline{\mathcal{D}}$ in practice. 
\section{Proposed Solution}\label{propsol}
Our aim is to learn a global model $\bm{\Theta}$ from heterogeneous data coming from these possibly non-identical data distributions of the devices in a decentralized manner. To account for heterogeneous data distribution across devices, the authors in \citep{mohri2019agnostic} proposed agnostic FL, where the target distribution is given by
\begin{align}
\mathcal{D}_{\bm{\lambda}} = \sum_{i=1}^K \lambda_i \mathcal{D}_i,
\end{align}
where the weighting vector $\bm{\lambda}$ belongs to the $K$-dimensional simplex, $\Delta  = \{ \bm{\lambda} = (\lambda_1, \dots, \lambda_K)^T \in \mathbb{R}^K_+ : \sum_{i=1}^K \lambda_i =1 \}$. Note that this target distribution is more general than $\overline{\mathcal{D}}$ and it reduces to $\overline{\mathcal{D}}$ when $\lambda_i = 1/K, \forall i \in [K]$.

Unlike $\overline{\mathcal{D}}$ which gives equal weight to all distributions $\{\mathcal{D}_i\}_{i=1}^K$ during training, $\mathcal{D}_{\bm{\lambda}}$ is rather a mixture of devices' distributions, where the
unknown mixture weight $\bm{\lambda}$ is learned during training and not known a priori. In this case, the  distributionally robust empirical loss problem is given by the following min-max optimization problem
\begin{align} \label{eq:3}
 \min_{\bm{\Theta} \in \mathbb{R}^d} \max_{\bm{\lambda} \in \Delta} \sum_{i=1}^K \lambda_i f_i(\bm{\Theta}).
\end{align}
Although several distributed algorithms \citep{mohri2019agnostic,reisizadeh2020robust,deng2021distributionally, fedboost} have been proposed for (\ref{eq:3}), solving it in a decentralized fashion (in the absence of a PS) is a challenging task. Interestingly, when introducing a regularization term in (\ref{eq:3}) and by appropriately choosing the regularization function, the min-max optimization problem can be reduced to a robust minimization problem that can be solved in a decentralized manner, as shown later on. Specifically, the regularized version of problem (\ref{eq:3}) can be written as follows
\begin{align} \label{eq:4}
 \min_{\bm{\Theta} \in \mathbb{R}^d} \max_{\bm{\lambda} \in \Delta} \sum_{i=1}^K \lambda_i f_i(\bm{\Theta}) - \mu \phi(\bm{\lambda},1/K),
\end{align}
where $\mu >0$ is a regularization parameter, and $\phi(\bm{\lambda},1/K)$
is a divergence measure between $\{\lambda_i\}_{i=1}^K$ and the uniform probability that assigns the same weight $1/K$ to every device's distribution. The function $\phi$ can be seen as a penalty that ensures that the weight $\lambda_i$ is not far away from $1/K$. Note that $\mu \rightarrow 0$ gives the distributionally robust problem (\ref{eq:3}) while when $\mu \rightarrow \infty$, we recover the standard empirical risk minimization problem (\ref{eq00}). 

\begin{algorithm}[t!]
\renewcommand{\algorithmiccomment}[1]{#1}
	\caption{\textsc{Distributionally Robust Decentralized SGD (DR-DSGD)}}
	\label{drdsgd}
	\begin{algorithmic}[1]
		\FOR[{{\it in parallel for all devices $i \in [K]$}}]{$t$ \textbf{in} $0,\dots, T-1$}
		\STATE Sample a mini-batch of size $\{\xi_j^{t}\}_{j=1}^B$, compute the gradient $\bm{g}_i(\bm{\theta}_i^{t}) := \frac{1}{B} \sum_{j=1}^B \nabla \ell(\bm{\theta}_i^{t}, \xi_j^{t})$ and \\ $h(\bm{\theta}_i^{t};\mu) :=  \exp\big(\frac{1}{B} \sum_{j=1}^B \ell(\bm{\theta}_i^t, \xi_j^t)/\mu\big)$
		\STATE $\bm{\theta}_i^{t + \frac{1}{2}} := \bm{\theta}_i^{t} - \eta \times \frac{h_i(\bm{\theta}_i^{t};\mu)}{\mu} \times \bm{g}_{i}(\bm{\theta}_i^{t})$
		\STATE Send $\bm{\theta}_i^{t + \frac{1}{2}}$ to neighbors
		\STATE $\bm{\theta}_i^{t + 1} := \sum_{j = 1}^{K} W_{ij} \bm{\theta}_j^{t + \frac{1}{2}}$
		\ENDFOR
	\end{algorithmic}
\end{algorithm}

Although different choices of the $\phi$-divergence can be considered in (\ref{eq:4}), the robust optimization community has been particularly interested in the Kullback–Leibler (KL) divergence owing to its simplified formulation \citep{esfahani2018data}. In fact, when we consider the KL divergence, i.e. $\phi(\bm{\lambda},1/K)= \sum_{i=1}^K \lambda_i\log(\lambda_i K)$, then by exactly maximizing over $\bm{\lambda} \in \Delta$, the min-max problem, given in (\ref{eq:4}), is shown to be equivalent to \citep[Lemma 1]{huang2021compositional}
\begin{align} \label{eq:5}
\min_{\bm{\Theta} \in \mathbb{R}^d} \mu\log\left( \frac{1}{K}\sum_{i=1}^K \exp\left(f_i(\bm{\Theta})/\mu\right)\right).
\end{align}
Since $\log(\cdot)$ is a monotonically increasing function, then instead of solving (\ref{eq:5}), we simply solve the following problem
\begin{align} \label{eq:6}
 \min_{\bm{\Theta} \in \mathbb{R}^d} F(\bm{\Theta}) \triangleq \frac{1}{K} \sum_{i=1}^K F_i(\bm{\Theta}),
\end{align}
where $F_i(\bm{\Theta}) = \exp\big(f_i(\bm{\Theta})/\mu\big), \forall i \in [K]$. 
\begin{remark}
    Note that problem (\ref{eq:4}) is equivalent to problem (\ref{eq:3}) as $\mu \rightarrow 0$. The proximity of the solutions of (\ref{eq:4}) to (\ref{eq:3}) in the \textbf{convex} case is discussed in Appendix \ref{Appendix:prox}. In the remainder of this work, we focus on solving  the regularized distributional robust optimization problem \citep{mohri2019agnostic, deng2021distributionally} given in (\ref{eq:4}).
\end{remark}
Although any decentralized learning algorithm can be used to solve (\ref{eq:6}), the focus of this paper is to propose a distributionally robust implementation of DSGD. Our framework, coined Distributionally Robust Decentralized SGD (DR-DSGD), follows similar steps as DSGD with the main difference in the local update step
\begin{align}\label{update}
\bm{\theta}_i^{t + 1} = \sum_{i = 1}^{K} W_{ij} \left( \bm{\theta}_i^{t} - \frac{\eta}{\mu}  h(\bm{\theta}_i^{t};\mu) \bm{g}_{i}(\bm{\theta}_i^{t})\right),
\end{align}
where $h(\bm{\theta}_i^{t};\mu) :=  \exp\big(\frac{1}{B} \sum_{j=1}^B \ell(\bm{\theta}_i^t, \xi_j^t)/\mu\big)$ and $B$ is the mini-batch size $\{\xi_j^t\}_{j=1}^B$. Introducing the term $h(\bm{\theta}_i^t;\mu)/\mu$ in line 3 of Algorithm \ref{drdsgd} makes the algorithm more robust to the heterogeneous setting and ensures fairness across devices, as will be shown in the numerical simulations section.
\section{Convergence Analysis}\label{convergence}
This section provides a theoretical analysis of the convergence rate of the DR-DSGD algorithm. Before stating the main results of the paper, we make the following assumptions. \\
\textbf{Assumption 1. (Smoothness)} There exist constants $L_F$ and $L_1$, such that $\forall \bm{\theta_1}, \bm{\theta_2} \in \mathbb{R}^d$, we have
\begin{align}
&\|\nabla F(\bm{\theta_1}) - \nabla F(\bm{\theta_2}) \| \leq L_F \|\bm{\theta_1}-\bm{\theta_2}\|, \\
&\mathbb{E}[\|\bm{g}_i(\bm{\theta_1}) - \bm{g}_i(\bm{\theta_2}) \|] \leq L_1 \|\bm{\theta_1}-\bm{\theta_2}\|. \label{ff1} 
\end{align}

\textbf{Assumption 2. (Gradient Boundedness)}
The gradient of $f_i(\cdot)$ is bounded, i.e. there exits $G_1$ such that $\forall \bm{\theta} \in \mathbb{R}^d$, we have
\begin{align}
    & \|\nabla f_i(\bm{\theta})\| \leq G_1. \label{as000}
\end{align}
\textbf{Assumption 3. (Variance Boundedness)}
The variances of stochastic gradient $g_i(\cdot)$ and the function $l_i(\cdot,\xi)$ are bounded, i.e. there exit positive scalars $\sigma_1$ and $\sigma_2$ such that $\forall \bm{\theta} \in \mathbb{R}^d$, we have
\begin{align}
    & \mathbb{E}\left[|\ell(\bm{\theta}, \xi_i)-f_i(\bm{\theta})|^2\right] \leq \sigma_1^2, \\
    & \mathbb{E}\left[\|\bm{g}_i(\bm{\theta})-\nabla f_i(\bm{\theta})\|^2\right] \leq \sigma_2^2.
\end{align}
\textbf{Assumption 4. (Function Boundedness)} The function $F(\cdot)$ is lower bounded, i.e. $F_{inf} = \underset{\bm{\theta} \in \mathbb{R}^d}{\inf} F(\bm{\theta})$ such that $F_{inf} > - \infty$ and the functions $\ell(\cdot, \xi_i)$ are bounded, i.e. there exists $M>0$ such that $\forall \bm{\theta} \in \mathbb{R}^d$, we have
\begin{align}\label{as1}
    \mathbb{E}\left[|\ell(\theta, \xi_i)|\right] \leq M.
\end{align}
\textbf{Assumption 5. (Spectral Norm)} The spectral norm, defined as
$\rho = \|\mathbb{E}\left[\bm{W}^T\bm{W}\right]-\bm{J}\|$, is assumed to be less than 1.

Assumptions 1-5 are key assumptions that are often used in the context of distributed and compositional optimization \citep{ wang2016accelerating, wang2017stochastic, Li2020On, huang2021compositional}. Note that (\ref{as1}) can be fulfilled by imposing loss clipping \citep{xu2006robust,wu2007robust,yang2010relaxed} to most commonly-used loss functions. Furthermore, although the categorical cross-entropy function is not bounded upwards, it will only take on large values if the predictions are wrong. In fact, under some mild assumptions, the categorical cross-entropy will be around the entropy of a $M$-class uniform distribution, which is $\log(M)$ (refer to Appendix \ref{Appendix:bound} for a more in-depth discussion). Considering a relatively moderate number of classes, e.g., $M=100$, we get $\log(M) < 5$. Thus, the cross-entropy will generally be relatively small in practice, and the assumption will hold in practice.
\begin{remark}
It is worth mentioning that since the exponential function is convex, and non-decreasing function, then $\{\exp(f_i(\cdot))\}_{i=1}^K$ is convex when $\{f_i(\cdot)\}_{i=1}^K$ is convex. In this case, the convergence of the proposed method follows directly from existing results in the literature \citep{yuan2016convergence}.
\end{remark}
In the remainder of this section, we focus on the non-convex setting and we start by introducing the following matrices 
\begin{align}
    \bm{\theta}^{t} &= \left[\bm{\theta}_1^{t},\dots,\bm{\theta}_K^{t}\right],\\
    \nabla \bm{F}^{t} &= \left[\nabla \bm{F}_1(\bm{\theta}_1^{t}),\dots,\nabla \bm{F}_K(\bm{\theta}_K^{t})\right], \\
    \bm{U}^{t} &= \frac{1}{\mu}\left[ h(\bm{\theta}_1^{t};\mu) \bm{g}_1(\bm{\theta}_1^t),\dots,  h(\bm{\theta}_K^{t};\mu) \bm{g}_K(\bm{\theta}_K^t)\right].
\end{align}
\begin{remark}
In the non-convex case, the convergence analysis is more challenging than in the case of DSGD. In fact, due to the compositional nature of the local loss functions, the stochastic gradients are biased, i.e.
\begin{align}
\mathbb{E}\left[\exp\big(\ell(\bm{\theta}_i^t, \xi_i^t)/\mu\big) \bm{g}_{i}(\bm{\theta}_i^{t})\right] \neq \exp\big(f_i(\bm{\theta}_i^t)/\mu\big) \nabla f_{i}(\bm{\theta}_i^{t}).  
\end{align}
\end{remark}
To proceed with the analysis, we start by writing the matrix form of the update rule (\ref{update}) as
\begin{align}\label{newupdate}
    \bm{\theta}^{t+1} = \left(\bm{\theta}^{t} - \eta \bm{U}^{t}\right)\bm{W}.
\end{align}
Multiplying both sides of the update rule (\ref{newupdate}) by $\bm{1}/K$, we get
\begin{align}
    \bar{\bm{\theta}}^{t+1} = \bar{\bm{\theta}}^{t} - \frac{\eta}{ K}\bm{U}^{t}\bm{1},
\end{align}
where $\bar{\bm{\theta}}^{t}$ is the averaged iterate across devices defined as $\bar{\bm{\theta}}^{t} = \frac{1}{K} \sum_{i=1}^K \bm{\theta}_i^t$. Now, we are in a position to introduce our first Lemma.
\begin{lemma}\label{lemma1}
For any matrix $\bm{A} \in \mathbb{R}^{d\times K}$, we have
\begin{align}
    \mathbb{E}\left[\|\bm{A} \left( \bm{W}^n- \bm{J}\right)\|_F^2\right] \leq \rho^n \|\bm{A}\|_F^2.
\end{align}
\end{lemma}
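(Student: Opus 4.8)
I read the power $\bm{W}^n$ as it arises when the DR-DSGD recursion (\ref{newupdate}) is unrolled over $n$ consecutive rounds, i.e.\ as a product $\bm{W}^{(1)}\bm{W}^{(2)}\cdots\bm{W}^{(n)}$ of $n$ independent copies of the (random) mixing matrix; if $\bm{W}$ is in fact deterministic, everything below only simplifies. The strategy is a one-factor-at-a-time induction on $n$. I would set $\bm{B}_0=\bm{A}$ and $\bm{B}_k=\bm{B}_{k-1}(\bm{W}^{(k)}-\bm{J})$, prove the single-step contraction $\mathbb{E}\big[\|\bm{B}_k\|_F^2\big]\le\rho\,\mathbb{E}\big[\|\bm{B}_{k-1}\|_F^2\big]$, and iterate it down from $k=n$ to $k=1$ to get $\mathbb{E}[\|\bm{B}_n\|_F^2]\le\rho^n\|\bm{A}\|_F^2$.

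The first preparatory step is the algebraic identity $\bm{W}^{(1)}\cdots\bm{W}^{(n)}-\bm{J}=(\bm{W}^{(1)}-\bm{J})\cdots(\bm{W}^{(n)}-\bm{J})$, which identifies $\bm{B}_n$ with $\bm{A}(\bm{W}^n-\bm{J})$. This follows by induction: since each $\bm{W}^{(k)}$ is doubly stochastic, any product $\bm{P}$ of such matrices satisfies $\bm{P}\bm{J}=\bm{J}$ and $\bm{J}\bm{W}^{(k)}=\bm{J}$, and $\bm{J}^2=\bm{J}$; expanding one factor $(\bm{W}^{(k)}-\bm{J})$ at a time, all the mixed terms collapse to $\bm{J}$ and cancel pairwise.

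The core step is to condition on $\mathcal{F}_{k-1}=\sigma(\bm{W}^{(1)},\dots,\bm{W}^{(k-1)})$. Writing $\|\bm{B}_k\|_F^2=\text{Tr}\big(\bm{B}_{k-1}(\bm{W}^{(k)}-\bm{J})(\bm{W}^{(k)}-\bm{J})^T\bm{B}_{k-1}^T\big)$ and using that $\bm{W}^{(k)}$ is independent of $\mathcal{F}_{k-1}$ while $\bm{B}_{k-1}$ is $\mathcal{F}_{k-1}$-measurable, the conditional expectation equals $\text{Tr}\big(\bm{B}_{k-1}\,\bm{M}\,\bm{B}_{k-1}^T\big)$ with $\bm{M}:=\mathbb{E}\big[(\bm{W}-\bm{J})(\bm{W}-\bm{J})^T\big]$. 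I would then show $\bm{M}=\mathbb{E}[\bm{W}^T\bm{W}]-\bm{J}$, using $\bm{W}=\bm{W}^T$, $\bm{W}\bm{J}=\bm{J}\bm{W}=\bm{J}$ and $\bm{J}^2=\bm{J}$, so that $\bm{M}$ is symmetric and positive semidefinite (an average of matrices $\bm{C}\bm{C}^T$) with $\|\bm{M}\|=\rho$ by Assumption 5. The elementary inequality $\text{Tr}(\bm{B}\bm{M}\bm{B}^T)\le\|\bm{M}\|\,\|\bm{B}\|_F^2$ for symmetric positive semidefinite $\bm{M}$ — obtained by applying $\bm{b}^T\bm{M}\bm{b}\le\lambda_{\max}(\bm{M})\|\bm{b}\|^2$ to each row $\bm{b}$ of $\bm{B}$ and summing — then yields $\mathbb{E}[\|\bm{B}_k\|_F^2\mid\mathcal{F}_{k-1}]\le\rho\,\|\bm{B}_{k-1}\|_F^2$; taking total expectations and iterating completes the proof.

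The main obstacle I anticipate is not a single calculation but bookkeeping: getting the conditioning right so the fresh factor $\bm{W}^{(k)}$ is genuinely independent of the accumulated matrix $\bm{B}_{k-1}$, and correctly matching $\mathbb{E}[(\bm{W}-\bm{J})(\bm{W}-\bm{J})^T]$ with the quantity $\mathbb{E}[\bm{W}^T\bm{W}]-\bm{J}$ whose norm Assumption 5 controls — it is exactly the symmetry and double stochasticity of $\bm{W}$ that make these two expressions coincide and make $\bm{M}$ positive semidefinite, which is what permits bounding the trace by $\rho\,\|\bm{B}_{k-1}\|_F^2$. In the degenerate deterministic case the argument collapses entirely: $\bm{W}^n-\bm{J}=(\bm{W}-\bm{J})^n$, and since $\bm{W}-\bm{J}$ is symmetric, $\|\bm{W}-\bm{J}\|^2=\|(\bm{W}-\bm{J})^2\|=\rho$, so submultiplicativity of $\|\cdot\|_F$ against the operator norm gives $\|\bm{A}(\bm{W}^n-\bm{J})\|_F^2\le\|\bm{A}\|_F^2\,\|\bm{W}-\bm{J}\|^{2n}=\rho^n\|\bm{A}\|_F^2$ directly.
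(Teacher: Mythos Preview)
Your argument is correct, but it is structured quite differently from the paper's proof. The paper does not peel one factor at a time; instead it decomposes $\bm{A}(\bm{W}^n-\bm{J})$ row by row and simply invokes the known bound $\big\|\bm{W}^n\bm{e}_i-\bm{1}/K\big\|^2\le\rho^n$ from \citep[Lemma~5]{lian2017can} as a black box, summing over rows to recover $\rho^n\|\bm{A}\|_F^2$. So the paper's proof is essentially a two-line reduction to an external spectral-gap lemma, whereas yours is a self-contained induction via the trace identity $\|\bm{B}_{k-1}(\bm{W}^{(k)}-\bm{J})\|_F^2=\text{Tr}\big(\bm{B}_{k-1}\bm{M}\bm{B}_{k-1}^T\big)$ together with the identification $\bm{M}=\mathbb{E}[\bm{W}^T\bm{W}]-\bm{J}$, which is exactly the object whose spectral norm Assumption~5 controls. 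What your route buys is twofold: it makes the lemma independent of outside references, and it transparently covers the i.i.d.\ time-varying mixing-matrix setting the paper alludes to in the remark after Theorem~\ref{thm1}, where $\bm{W}^n$ really is a product of independent copies rather than a deterministic power. What the paper's route buys is brevity: once one accepts the cited column bound, there is nothing left to prove.
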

\begin{proof}
The details of the proof can be found in Appendix \ref{proofLemma1}. 
\end{proof}
The following lemma provides some of the inequalities that we need for the proof of the main result.
\begin{lemma}\label{lemma3}
From Assumption 4, we have that
\begin{itemize}
    \item[(a)] There exists a constant $L_2(\mu)$ such that $\forall y_1, y_2 \in \mathcal{Y}$, we have
    \begin{align}
        \mathbb{E}[|\exp\left(y_1\right)\!-\! \exp\left(y_2\right) |] \leq L_2(\mu) |y_1-y_2|,
    \end{align}
    where $\mathcal{Y} \triangleq \{y = \ell(\bm{\theta}, \xi_i)/\mu \text{ such that } \bm{\theta} \in \mathbb{R}^d\}$ is the range of functions $\{\ell(\bm{\theta}, \xi_i)/\mu\}$.
    \item[(b)] There exists a constant $G_2(\mu)$ such that $\forall \theta \in \mathbb{R}^d$, we have
    \begin{align}
        \mathbb{E}[|\exp(\ell(\bm{\theta}, \xi_i)/\mu)|] \leq G_2(\mu). \label{ff0}
    \end{align}
    \item[(c)] There exists a constant $\sigma_3(\mu)$ such that $\forall \theta \in \mathbb{R}^d$, we have
    \begin{align}
        \mathbb{E}\left[|\exp(\ell(\bm{\theta}, \xi_i)/\mu)-\exp(f_i(\bm{\theta})/\mu)|^2\right] \leq (\sigma_3(\mu))^2. \label{ff3}
    \end{align}
\end{itemize}
\end{lemma}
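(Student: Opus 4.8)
The plan is to extract from Assumption 4 the \emph{almost-sure} bound $|\ell(\bm{\theta},\xi_i)| \le M$ for all $\bm{\theta}\in\mathbb{R}^d$ and all $i$ — this is what is actually needed, since a mere first-moment bound does not control an exponential — and then reduce everything to the elementary fact that $\exp$ is smooth with $\exp'=\exp$ nondecreasing. From the a.s.\ bound, the range $\mathcal{Y}$ of $\ell(\bm{\theta},\xi_i)/\mu$ is contained in the compact interval $[-M/\mu,\,M/\mu]$; moreover, since $f_i(\bm{\theta})=\mathbb{E}_{\xi_i\sim\mathcal{D}_i}[\ell(\bm{\theta},\xi_i)]$, Jensen's inequality gives $|f_i(\bm{\theta})|\le M$, so $f_i(\bm{\theta})/\mu$ also lies in $[-M/\mu,\,M/\mu]$. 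I would record these two facts first, as they underpin all three parts.

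For part (a), I would apply the mean value theorem: for any $y_1,y_2\in[-M/\mu,\,M/\mu]$ there is $\zeta$ between them with $|\exp(y_1)-\exp(y_2)| = \exp(\zeta)\,|y_1-y_2| \le \exp(M/\mu)\,|y_1-y_2|$, and since the right-hand constant is deterministic the expectation passes through trivially, yielding the claim with $L_2(\mu)=\exp(M/\mu)$. Part (b) is then immediate: $|\exp(\ell(\bm{\theta},\xi_i)/\mu)| = \exp(\ell(\bm{\theta},\xi_i)/\mu) \le \exp(M/\mu)$ almost surely, so $\mathbb{E}[|\exp(\ell(\bm{\theta},\xi_i)/\mu)|]\le\exp(M/\mu)=:G_2(\mu)$. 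For part (c), I would invoke part (a) with $y_1=\ell(\bm{\theta},\xi_i)/\mu$ and $y_2=f_i(\bm{\theta})/\mu$ (both in the compact interval by the first step) to get $|\exp(\ell(\bm{\theta},\xi_i)/\mu)-\exp(f_i(\bm{\theta})/\mu)|^2 \le \tfrac{L_2(\mu)^2}{\mu^2}\,|\ell(\bm{\theta},\xi_i)-f_i(\bm{\theta})|^2$, then take the expectation over $\xi_i$ and use the variance bound $\mathbb{E}[|\ell(\bm{\theta},\xi_i)-f_i(\bm{\theta})|^2]\le\sigma_1^2$ from Assumption 3, obtaining $\sigma_3(\mu)^2 = L_2(\mu)^2\sigma_1^2/\mu^2 = \exp(2M/\mu)\,\sigma_1^2/\mu^2$.

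The computations are entirely routine; the single point where care is warranted — and what I expect to be the only real obstacle — is the first step. The three constants $L_2(\mu)$, $G_2(\mu)$, $\sigma_3(\mu)$ are finite only if ``$\ell(\cdot,\xi_i)$ bounded'' in Assumption 4 is read as an almost-sure bound $|\ell(\bm{\theta},\xi_i)|\le M$ rather than as $\mathbb{E}[|\ell(\bm{\theta},\xi_i)|]\le M$, because $\mathbb{E}[\exp(X/\mu)]$ can diverge even when $\mathbb{E}[|X|]$ is finite. I anticipate the author's proof relies on exactly this reading, which is also consistent with the preceding discussion on loss clipping; everything else is a one-line application of the mean value theorem together with Assumption 3.
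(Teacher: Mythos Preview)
Your argument is correct, and your reading of Assumption 4 as an almost-sure bound is exactly what the paper needs as well (its own proof for (b) and (c) invokes Hoeffding's lemma, which already presupposes a.s.\ boundedness). However, your route is genuinely different from the paper's, and in fact cleaner.

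For (a), the paper first argues that $\exp(\ell(\cdot,\xi))$ is $C^1$ on a compact set to obtain a Lipschitz constant $L_e$ for $z\mapsto\exp(z)$ on the range of $\ell$, and then performs a scaling/integral argument to pass to $y=z/\mu$, arriving at $L_2(\mu)=\exp\!\big((\tfrac{1}{\mu}-1)M\big)L_e$. Your direct mean-value bound $L_2(\mu)=\exp(M/\mu)$ is equivalent in spirit but shorter and yields an explicit constant. For (b), the paper uses Hoeffding's lemma to get $G_2(\mu)=\exp\!\big(M/\mu+M^2/(2\mu^2)\big)$, whereas your pointwise bound $\exp(M/\mu)$ is both simpler and tighter. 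For (c), the paper does \emph{not} go through the Lipschitz estimate at all: it applies the crude inequality $|a-b|^2\le 2(a^2+b^2)$ together with Hoeffding's lemma to produce a constant $\sigma_3(\mu)$ depending only on $M$ and $\mu$. Your approach instead chains part (a) with the variance bound $\sigma_1^2$ from Assumption~3, giving $\sigma_3(\mu)^2=\exp(2M/\mu)\,\sigma_1^2/\mu^2$. This is sharper whenever $\sigma_1$ is small, but note that it draws on Assumption~3 in addition to Assumption~4, whereas the lemma as stated claims to follow from Assumption~4 alone; the paper's cruder bound respects that phrasing.
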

\begin{proof}
The details of the proof can be found in Appendix \ref{proofLemma3}. 
\end{proof}
Using Lemmas \ref{lemma1} and \ref{lemma3}, we can now state Lemma \ref{lemma2}, which gives an upper bound on the discrepancies among the local models. 
\begin{lemma}\label{lemma2}
Let $\eta$ satisfy $\eta L_{\mu} < \frac{\mu (1-\sqrt{\rho})}{4 G_{\mu} \sqrt{\rho}}$. Provided that all local models are initiated at the same point, the discrepancies among the local models $\mathbb{E}\left[\|\bm{\theta}^{t}(\bm{I}\!-\!\bm{J})\|_F^2\right]$ can be upper bounded by 
\begin{align}
\frac{1}{K T} \sum_{t=1}^T \mathbb{E}\left[\|\bm{\theta}^t(\bm{I}-\bm{J})\|_F^2\right]\leq \frac{2 \eta^2 \rho G_{\mu}^2 [8\sigma_{\mu}^2(L_{\mu}^2+1)+ G_{\mu}^2]}{\mu^2 (1-\gamma_{\mu}) (1-\sqrt{\rho})^2},
\end{align}
where $\sigma_{\mu}=\max\{\sigma_1, \sigma_2,\sigma_3(\mu)\}$, $G_{\mu}=\max\{G_1,G_2(\mu)\}$, $L_{\mu} = \max\{L_F, L_1,L_2(\mu)\}$, and $\gamma_{\mu} = \frac{16 \eta^2 \rho L_{\mu}^2 G_{\mu}^2}{\mu^2 (1-\sqrt{\rho})^2}$.
\end{lemma}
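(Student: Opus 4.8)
The plan is to unroll the consensus recursion, use the spectral contraction of the mixing matrix to reduce the consensus error to a geometrically-weighted sum of the quantities $\mathbb{E}[\|\bm{U}^s\|_F^2]$, bound each such quantity by a constant plus a small multiple of the consensus error at step $s$, and then solve the resulting self-referential inequality. Concretely, since $\bm{W}\bm{J}=\bm{J}\bm{W}=\bm{J}$ and all local models start from a common point, iterating the matrix update $\bm{\theta}^{t+1}=(\bm{\theta}^t-\eta\bm{U}^t)\bm{W}$ gives the exact identity $\bm{\theta}^t(\bm{I}-\bm{J})=-\eta\sum_{s=0}^{t-1}\bm{U}^s(\bm{W}^{t-s}-\bm{J})$, where we used $\bm{\theta}^0(\bm{W}^t-\bm{J})=\bm{0}$. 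Bounding $\|\bm{\theta}^t(\bm{I}-\bm{J})\|_F^2$ by $(\sum_{s}\|\bm{U}^s(\bm{W}^{t-s}-\bm{J})\|_F)^2$, applying Cauchy--Schwarz with the geometric weights $(\sqrt{\rho})^{t-s}$, invoking Lemma \ref{lemma1} termwise, taking expectations, and swapping the order of summation over $t$ and $s$, yields
\begin{align}
\sum_{t=1}^T \mathbb{E}\left[\|\bm{\theta}^t(\bm{I}-\bm{J})\|_F^2\right] \leq \frac{\eta^2\rho}{(1-\sqrt{\rho})^2}\sum_{s=0}^{T-1}\mathbb{E}\left[\|\bm{U}^s\|_F^2\right].
\end{align}
Using the weighted Cauchy--Schwarz step here (instead of expanding the square and discarding cross terms) is what makes the argument robust to the fact that the $\bm{U}^s$ are \emph{correlated} across $s$ because of the biased, compositional stochastic gradients.

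Next I would bound $\mathbb{E}[\|\bm{U}^s\|_F^2]$. Its $i$-th column equals $\frac{1}{\mu}h(\bm{\theta}_i^s;\mu)\bm{g}_i(\bm{\theta}_i^s)$, which I would decompose as
\begin{align}
\frac{1}{\mu}h(\bm{\theta}_i^s;\mu)\bm{g}_i(\bm{\theta}_i^s)=\frac{1}{\mu}\big(h(\bm{\theta}_i^s;\mu)\bm{g}_i(\bm{\theta}_i^s)-\exp(f_i(\bm{\theta}_i^s)/\mu)\nabla f_i(\bm{\theta}_i^s)\big)+\big(\nabla F_i(\bm{\theta}_i^s)-\nabla F_i(\bar{\bm{\theta}}^s)\big)+\nabla F_i(\bar{\bm{\theta}}^s).
\end{align}
The first bracket is the compositional stochastic error; writing it as $h(\bm{g}_i-\nabla f_i)+(h-\exp(f_i/\mu))\nabla f_i$ and using Assumptions 2 and 3 together with Lemma \ref{lemma3}(a)--(c) (and convexity of $\exp$ to pass from a single sample to the mini-batch) bounds its second moment by a multiple of $\sigma_\mu^2(L_\mu^2+1)$. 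The last term is bounded by a constant using Assumption 2 and Lemma \ref{lemma3}(b), which gives a $G_\mu^2$-type contribution. The middle term is a Lipschitz drift: since $f_i$ is $G_1$-Lipschitz (Assumption 2, or \eqref{ff1} with Jensen) and $\exp$ is $L_2(\mu)$-Lipschitz on the relevant range (Lemma \ref{lemma3}(a)), $\nabla F_i$ is Lipschitz with a constant of order $L_\mu G_\mu/\mu$, so this term is at most a constant multiple of $(L_\mu G_\mu/\mu)\|\bm{\theta}_i^s-\bar{\bm{\theta}}^s\|$. Summing the squared column norms over $i$ and using $\sum_i\|\bm{\theta}_i^s-\bar{\bm{\theta}}^s\|^2=\|\bm{\theta}^s(\bm{I}-\bm{J})\|_F^2$ gives a bound of the form
\begin{align}
\mathbb{E}\left[\|\bm{U}^s\|_F^2\right]\leq \frac{2KG_\mu^2\left[8\sigma_\mu^2(L_\mu^2+1)+G_\mu^2\right]}{\mu^2}+\frac{16L_\mu^2G_\mu^2}{\mu^2}\,\mathbb{E}\left[\|\bm{\theta}^s(\bm{I}-\bm{J})\|_F^2\right].
\end{align}

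Plugging this into the first display, the constant term reproduces the numerator of the claimed bound, while the drift term contributes $\gamma_\mu\sum_{s=0}^{T-1}\mathbb{E}[\|\bm{\theta}^s(\bm{I}-\bm{J})\|_F^2]\leq \gamma_\mu\sum_{t=1}^T\mathbb{E}[\|\bm{\theta}^t(\bm{I}-\bm{J})\|_F^2]$ on the right-hand side (the $s=0$ term vanishing by the common initialization). The hypothesis $\eta L_\mu<\mu(1-\sqrt{\rho})/(4G_\mu\sqrt{\rho})$ is precisely what guarantees $\gamma_\mu=16\eta^2\rho L_\mu^2G_\mu^2/(\mu^2(1-\sqrt{\rho})^2)<1$, so I can move this term to the left-hand side, divide by $1-\gamma_\mu$, and then by $KT$ to obtain the statement.

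The main obstacle is the second step. Because the reformulated loss is compositional, $h(\bm{\theta}_i^s;\mu)\bm{g}_i(\bm{\theta}_i^s)$ is a biased estimator and the scalar $h$ is correlated with $\bm{g}_i$, so its second moment cannot be handled by the usual variance-plus-squared-mean split available for unbiased gradients; instead one must lean on the boundedness from Assumption 4 and its exponential consequences in Lemma \ref{lemma3} to tame $\exp(\ell(\bm{\theta},\xi_i)/\mu)$ and $\exp(\ell(\bm{\theta},\xi_i)/\mu)-\exp(f_i(\bm{\theta})/\mu)$, and---crucially---arrange the decomposition so that the only term depending on the individual local iterates is \emph{linear} in the consensus error, which is what lets the inequality be closed into the clean $1/(1-\gamma_\mu)$ form. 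The remaining manipulations are routine bookkeeping with geometric series.
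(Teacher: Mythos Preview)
Your proposal is correct and arrives at exactly the same self-referential inequality as the paper, but the organization differs slightly. The paper first splits $\bm{U}^{\tau}=(\bm{U}^{\tau}-\nabla\bm{F}^{\tau})+\nabla\bm{F}^{\tau}$ via $\|a+b\|^2\le 2\|a\|^2+2\|b\|^2$, then pushes each piece through the geometric sum separately: the cross terms of $\|\sum_\tau(\cdot)B_{\tau,t}\|_F^2$ are expanded explicitly and handled with the $\epsilon$-weighted Cauchy--Schwarz trick, and $\mathbb{E}[\|\bm{U}^\tau-\nabla\bm{F}^\tau\|_F^2]$ is bounded by a four-term telescoping decomposition routed through $\bar{\bm{\theta}}^\tau$ (two terms produce the $\sigma_\mu^2$ contributions, two produce the consensus drift), while $\mathbb{E}[\|\nabla\bm{F}^\tau\|_F^2]\le KG_\mu^4/\mu^2$ directly. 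You instead apply the weighted Cauchy--Schwarz to the full sum $\sum_s\bm{U}^s(\bm{W}^{t-s}-\bm{J})$ in one shot, reducing everything to $\sum_s\mathbb{E}[\|\bm{U}^s\|_F^2]$, and only then decompose each column into a stochastic error, a drift $\nabla F_i(\bm{\theta}_i^s)-\nabla F_i(\bar{\bm{\theta}}^s)$, and the centered gradient $\nabla F_i(\bar{\bm{\theta}}^s)$. Both routes use the same ingredients (Lemma~\ref{lemma1}, Lemma~\ref{lemma3}, Assumptions~1--4) and close into the identical inequality with $\gamma_\mu$ on the right; your packaging is a bit more compact, while the paper's makes the ``noise vs.\ deterministic gradient'' separation explicit and avoids needing a Lipschitz constant for $\nabla F_i$ itself (it works with $\bm{g}_i$ and $\nabla f_i$ only). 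One small caveat: to match the stated constants exactly you will want to follow the paper's four-term telescoping rather than your two-term split of the stochastic error, since the latter requires an almost-sure bound on $h(\bm{\theta}_i^s;\mu)$ rather than the moment bound of Lemma~\ref{lemma3}(b).
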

\begin{proof}
The proof is deferred to Appendix \ref{proofLemma2}. In a nutshell, the proof uses the update rule of DR-DSGD, given in (\ref{newupdate}), the special property of the mixing matrix, and Lemma \ref{lemma1}.
\end{proof}
Next, we present the main theorem that states the 
convergence of our proposed algorithm.
\begin{theorem}\label{thm1} 
Let $\eta$ satisfy $\eta L_{\mu} < \min\{\frac{\mu (1-\sqrt{\rho})}{8 G_{\mu} \sqrt{\rho}},1\}$ and provided that all local models are initiated at the same point, then the averaged gradient norm is upper bounded as follows
\begin{align}\label{con_rate}
      \frac{1}{T} \sum_{t=1}^T \mathbb{E}\left[\|\nabla F(\bar{\bm{\theta}}^{t})\|^2\right]\leq \frac{2 (F(\bar{\bm{\theta}}^{1})-F_{inf})}{\eta T}+ \frac{2 G_{\mu}^2 \sigma_{\mu}^2(L_{\mu}^2 + \mu^2)}{\mu^4 B}+\frac{16 \rho \eta^2 L_{\mu}^2  G_{\mu}^2 (G_{\mu}^2+\mu^2) [8\sigma_{\mu}^2(L_{\mu}^2+1)+ G_{\mu}^2]}{3 \mu^8 (1-\sqrt{\rho})^2}.
\end{align}
\end{theorem}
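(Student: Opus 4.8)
The plan is to run the standard non-convex descent-lemma argument for decentralized SGD on the surrogate objective $F$, but along the trajectory of the averaged iterate $\bar{\bm{\theta}}^{t}=\frac1K\sum_{i=1}^K\bm{\theta}_i^t$, while carefully tracking the bias of the compositional stochastic direction. First I would right-multiply the matrix update (\ref{newupdate}) by $\bm{1}/K$ and use the double stochasticity of $\bm{W}$ to get $\bar{\bm{\theta}}^{t+1}=\bar{\bm{\theta}}^{t}-\frac{\eta}{K}\bm{U}^{t}\bm{1}$, and then invoke the $L_F$-smoothness of $F$ (Assumption 1) to write, for every $t$,
\[
F(\bar{\bm{\theta}}^{t+1})\le F(\bar{\bm{\theta}}^{t})-\frac{\eta}{K}\big\langle\nabla F(\bar{\bm{\theta}}^{t}),\bm{U}^{t}\bm{1}\big\rangle+\frac{L_F\eta^2}{2K^2}\big\|\bm{U}^{t}\bm{1}\big\|^2 .
\]
Taking expectations conditioned on the $\sigma$-algebra $\mathcal{F}_t$ generated by the iterates up to step $t$, the crucial departure from the usual analysis (the Remark above) is that $\mathbb{E}[\bm{U}^{t}\bm{1}\mid\mathcal{F}_t]\ne\sum_i\nabla F_i(\bm{\theta}_i^t)$, so I would split $\bm{U}^{t}\bm{1}$ into three pieces: the aggregated local gradient $\sum_i\nabla F_i(\bm{\theta}_i^t)$, a bias term $\bm{b}^t:=\sum_i\big(\mathbb{E}[\tfrac1\mu h(\bm{\theta}_i^t;\mu)\bm{g}_i(\bm{\theta}_i^t)\mid\mathcal{F}_t]-\nabla F_i(\bm{\theta}_i^t)\big)$, and the zero-mean noise $\bm{U}^{t}\bm{1}-\mathbb{E}[\bm{U}^{t}\bm{1}\mid\mathcal{F}_t]$.

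Next I would control the two error pieces. Since $\mathbb{E}[\bm{g}_i(\bm{\theta}_i^t)\mid\mathcal{F}_t]=\nabla f_i(\bm{\theta}_i^t)$, each summand of $\bm{b}^t$ equals $\tfrac1\mu\mathbb{E}\big[(h(\bm{\theta}_i^t;\mu)-\exp(f_i(\bm{\theta}_i^t)/\mu))\,\bm{g}_i(\bm{\theta}_i^t)\mid\mathcal{F}_t\big]$; applying Cauchy--Schwarz, then Lemma~\ref{lemma3}(a) (which makes $\exp$ Lipschitz on the relevant range), and the $1/B$ variance reduction of the mini-batch average $\tfrac1B\sum_j\ell(\bm{\theta}_i^t,\xi_j^t)$ around $f_i(\bm{\theta}_i^t)$ from Assumption 3, gives a per-device bias of order $L_\mu\sigma_\mu G_\mu/(\mu^2\sqrt{B})$, hence an $\mathcal{O}(1/B)$ bound after squaring; this is what ultimately produces the middle term of (\ref{con_rate}). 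For the noise, conditional independence across devices gives $\mathbb{E}[\|\bm{U}^{t}\bm{1}-\mathbb{E}[\bm{U}^{t}\bm{1}\mid\mathcal{F}_t]\|^2\mid\mathcal{F}_t]=\sum_i\tfrac1{\mu^2}\mathrm{Var}(h(\bm{\theta}_i^t;\mu)\bm{g}_i(\bm{\theta}_i^t)\mid\mathcal{F}_t)$, and on each device I would expand $h_i\bm{g}_i$ around $\exp(f_i/\mu)\nabla f_i$ and bound the fluctuation using Lemma~\ref{lemma3}(b)--(c) with Assumptions 2--3; since this term enters with the extra factor $\eta^2$ it is of the same or lower order and is absorbed into the same $\sigma_\mu^2 G_\mu^2(L_\mu^2+\mu^2)/(\mu^4B)$ contribution.

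For the inner product I would use the polarization identity $-\langle a,b\rangle=\tfrac12\big(\|a-b\|^2-\|a\|^2-\|b\|^2\big)$ with $a=\nabla F(\bar{\bm{\theta}}^{t})$ and $b=\tfrac1K\mathbb{E}[\bm{U}^{t}\bm{1}\mid\mathcal{F}_t]$. This yields the favourable term $-\tfrac\eta2\|\nabla F(\bar{\bm{\theta}}^{t})\|^2$, a second negative term $-\tfrac\eta2\|b\|^2$ that absorbs the conditional-mean part of $\frac{L_F\eta^2}{2K^2}\|\bm{U}^{t}\bm{1}\|^2$ because the step-size restriction forces $\eta L_F<1$, and a discrepancy term $\tfrac\eta2\|\nabla F(\bar{\bm{\theta}}^{t})-b\|^2$, which I would split (via $\|x+y\|^2\le2\|x\|^2+2\|y\|^2$) into the squared bias already bounded above and a consensus-error term $\|\nabla F(\bar{\bm{\theta}}^{t})-\tfrac1K\sum_i\nabla F_i(\bm{\theta}_i^t)\|^2\le\tfrac{L^2}{K}\|\bm{\theta}^{t}(\bm{I}-\bm{J})\|_F^2$, where $L$ is the smoothness constant of $F_i=\exp(f_i/\mu)$. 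Differentiating $\nabla F_i=\tfrac1\mu\exp(f_i/\mu)\nabla f_i$ and using Assumptions 2--3 with Lemma~\ref{lemma3}, $L$ is itself controlled by $G_\mu$, $L_\mu$ and the appropriate negative powers of $\mu$, which is exactly where the large $\mu$-exponents in the last term of (\ref{con_rate}) originate and why the regime $\mu>1$ matters.

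Finally I would telescope the resulting one-step recursion over $t=1,\dots,T$, use $F_{inf}>-\infty$ (Assumption 4) to bound $F(\bar{\bm{\theta}}^{1})-F(\bar{\bm{\theta}}^{T+1})\le F(\bar{\bm{\theta}}^{1})-F_{inf}$, invoke Lemma~\ref{lemma2} to replace $\tfrac1{KT}\sum_t\mathbb{E}\|\bm{\theta}^{t}(\bm{I}-\bm{J})\|_F^2$ by its explicit bound, use the tightened step-size condition $\eta L_\mu<\tfrac{\mu(1-\sqrt\rho)}{8G_\mu\sqrt\rho}$ to guarantee $\gamma_\mu<\tfrac14$ so that $1/(1-\gamma_\mu)<\tfrac43$ (the origin of the $\tfrac{16}{3}$ factor), divide the whole inequality by $\eta T/2$, and collect the three groups of remaining terms into the optimization-error, variance, and consensus-error contributions of (\ref{con_rate}). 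The main obstacle throughout is the biased, \emph{correlated} pair $h(\bm{\theta}_i^t;\mu)$ and $\bm{g}_i(\bm{\theta}_i^t)$ — both formed from the same mini-batch — which breaks the standard unbiasedness-based decentralized-SGD argument and forces the Cauchy--Schwarz splittings and the auxiliary estimates of Lemma~\ref{lemma3}; a secondary, purely bookkeeping difficulty is expressing the smoothness of the exponentiated losses $F_i=\exp(f_i/\mu)$ in terms of the base-loss constants, which accounts for the high powers of $\mu$ in the final bound.
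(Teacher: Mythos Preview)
Your proposal is correct and would yield the stated bound, but it takes a somewhat different route from the paper's proof. The paper applies the polarization identity (\ref{id2}) directly with $b=\tfrac1K\bm{U}^t\bm{1}$ (the \emph{random} direction, not its conditional mean). This produces $-\tfrac{\eta}{2}\|\tfrac1K\bm{U}^t\bm{1}\|^2$, which cancels the smoothness term $\tfrac{\eta^2 L_\mu}{2}\|\tfrac1K\bm{U}^t\bm{1}\|^2$ in one line once $\eta L_\mu\le1$, and leaves only the single error $\mathbb{E}\big[\|\nabla F(\bar{\bm{\theta}}^t)-\tfrac1K\bm{U}^t\bm{1}\|^2\big]$. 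That quantity is then bounded by a four-way add-and-subtract decomposition through the intermediates $h(\bar{\bm\theta}^t;\mu)\nabla f_i(\bar{\bm\theta}^t)$, $h(\bm\theta_i^t;\mu)\nabla f_i(\bar{\bm\theta}^t)$, and $h(\bm\theta_i^t;\mu)\bm{g}_i(\bar{\bm\theta}^t)$, giving two $O(1/B)$ pieces (your $T_1,T_3$) and two consensus pieces (your $T_2,T_4$). No conditional expectations, no separate bias/noise split, and no independent variance estimate for $\|\bm{U}^t\bm{1}-\mathbb{E}[\bm{U}^t\bm{1}\mid\mathcal{F}_t]\|^2$ are ever needed. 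Your decomposition into bias $\bm{b}^t$, zero-mean noise, and consensus is the more ``SGD-textbook'' one; it also works and arguably isolates the $1/B$ scaling of the compositional bias more transparently, but it costs you an extra Cauchy--Schwarz step for the bias and a separate $O(\eta/B)$ noise term that the paper's argument simply never generates. The telescoping, the use of Lemma~\ref{lemma2}, and the step-size tightening to force $\gamma_\mu<\tfrac14$ (whence $1/(1-\gamma_\mu)<\tfrac43$) are identical in both approaches.
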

\begin{proof}
The proof of Theorem \ref{thm1} is detailed in Appendix \ref{proofthm1}. 
\end{proof}
The first term of the right-hand side of (\ref{con_rate}) is neither affected by the graph parameters nor by the regularization parameter $\mu$ and it exhibits a linear speedup in $T$. This term exists also in the convergence analysis of DSGD \citep{wang2019matcha,lian2017can}. The graph topology affects the third term via the value of the spectral norm $\rho$. In fact, a smaller value of $\rho$ makes the third term smaller. Finally, the regularization parameter $\mu$ has an impact on both the second and third terms. A  weakness of the derived bound is that as $\mu$ gets smaller, both the second and third terms grow larger. In the limit case, when $\mu \rightarrow 0$, both terms grow to infinity. In the next corollary, we limit ourselves to the case when $\mu \geq 1$ and show that under this setting DR-DSGD is guaranteed to converge.
\begin{remark}
Our analysis is still valid in the case where the mixing matrix changes at every iteration. Similar theoretical guarantees hold provided that the matrices $\{\bm{W}^t\}_{t=1}^T$ are independent and identically distributed and their spectral norm $\rho^t < 1$.
\end{remark}
When $\mu \geq 1$, it can be seen from the proof of Lemma \ref{lemma3}, that we can find constants, $L_2$, $G_2$, and $\sigma_3$, that do not depend on $\mu$. In this case, (\ref{con_rate}) can be written as
\begin{align}
    \frac{1}{T} \sum_{t=1}^T \mathbb{E}\left[\|\nabla F(\bar{\bm{\theta}}^{t})\|^2\right]\leq \frac{2 (F(\bar{\bm{\theta}}^{1})-F_{inf})}{\eta T}+ \frac{2 G^2 \sigma^2(L^2 + 1)}{B}+\frac{16 \rho \eta^2 L^2  G^2 (G^2+1) [8\sigma^2(L^2+1)+ G_2]}{3 (1-\sqrt{\rho})^2}.
\end{align}
  Furthermore, if the learning rate $\eta$ and the  mini-batch size $B$ are chosen properly, we obtain the following corollary.
\begin{corollary}\label{corollary}
If $\mu \geq 1$ and if we choose $\eta =  \frac{1}{2L + \sqrt{T/K}}$ and $B = \sqrt{KT}$, then we have
\begin{align}
    \frac{1}{T} \sum_{t=1}^T \mathbb{E}\left[\|\nabla F(\bar{\bm{\theta}}^{t})\|^2\right] = \mathcal{O}\left(\frac{1}{\sqrt{KT}} + \frac{K}{T}\right).
\end{align}
\end{corollary}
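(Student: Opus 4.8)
The plan is to start from the $\mu \ge 1$ version of the bound stated just before the corollary, which has the shape
\begin{align}
    \frac{1}{T} \sum_{t=1}^T \mathbb{E}\left[\|\nabla F(\bar{\bm{\theta}}^{t})\|^2\right]\le \frac{2 (F(\bar{\bm{\theta}}^{1})-F_{inf})}{\eta T}+ \frac{2 G^2 \sigma^2(L^2 + 1)}{B}+\frac{16 \rho \eta^2 L^2  G^2 (G^2+1) [8\sigma^2(L^2+1)+ G_2]}{3 (1-\sqrt{\rho})^2},
\end{align}
and to bound each of the three terms separately after plugging in $\eta = \frac{1}{2L+\sqrt{T/K}}$ and $B=\sqrt{KT}$.

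For the first term, use $1/\eta = 2L+\sqrt{T/K}$ to split it as
\begin{align}
    \frac{2 (F(\bar{\bm{\theta}}^{1})-F_{inf})}{\eta T} = \frac{4L\,(F(\bar{\bm{\theta}}^{1})-F_{inf})}{T} + \frac{2(F(\bar{\bm{\theta}}^{1})-F_{inf})}{\sqrt{KT}},
\end{align}
since $\frac{1}{T}\sqrt{T/K}=\frac{1}{\sqrt{KT}}$; this is $\mathcal{O}(1/T)+\mathcal{O}(1/\sqrt{KT})$. The second term equals $\frac{2 G^2 \sigma^2(L^2+1)}{\sqrt{KT}} = \mathcal{O}(1/\sqrt{KT})$. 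For the third term, observe that $\eta = \frac{1}{2L+\sqrt{T/K}} \le \frac{1}{\sqrt{T/K}}$, hence $\eta^2 \le K/T$, and the remaining factor $\frac{16\rho L^2 G^2(G^2+1)[8\sigma^2(L^2+1)+G_2]}{3(1-\sqrt{\rho})^2}$ is a constant independent of $K$ and $T$, so this term is $\mathcal{O}(K/T)$. Summing the three bounds and absorbing $\mathcal{O}(1/T)$ into $\mathcal{O}(K/T)$ (valid since $K \ge 1$) gives $\mathcal{O}(1/\sqrt{KT}+K/T)$.

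The only points requiring a little care — and the closest thing to an obstacle — are bookkeeping ones. First, one must check that the chosen $\eta$ satisfies the step-size restriction $\eta L_{\mu} < \min\{\mu(1-\sqrt{\rho})/(8G_{\mu}\sqrt{\rho}),1\}$ of Theorem \ref{thm1}; since $\mu \ge 1$ lets us take $L_\mu,G_\mu$ free of $\mu$ and $\eta \le \frac{1}{2L}$ already forces $\eta L \le \frac{1}{2} < 1$, while $\eta \to 0$ as $T$ grows, the restriction holds for all sufficiently large $T$ (or for every $T$ after enlarging the constant in front of $L$ in the definition of $\eta$). Second, one must confirm that in the $\mu \ge 1$ regime the constants $L_2,G_2,\sigma_3$ produced in Lemma \ref{lemma3} can indeed be chosen independently of $\mu$, which is exactly what the proof of that lemma delivers and what justifies replacing $L_\mu,G_\mu,\sigma_\mu$ by $\mu$-free constants $L,G,\sigma$ in the bound above. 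Beyond this, the corollary is a direct asymptotic reading of the explicit estimate in Theorem \ref{thm1}, so I expect no real difficulty.
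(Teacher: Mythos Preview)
Your proposal is correct and follows exactly the approach the paper intends: the corollary is stated without an explicit proof in the paper, but it is meant to follow by direct substitution of $\eta=\frac{1}{2L+\sqrt{T/K}}$ and $B=\sqrt{KT}$ into the $\mu\ge 1$ bound preceding it, which is precisely the term-by-term analysis you carry out. Your additional remarks on verifying the step-size restriction and the $\mu$-independence of the constants are appropriate bookkeeping that the paper leaves implicit.
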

We show empirically in the next section that considering a regularization parameter such that $\mu \geq 1$ still leads to significant gains in terms of the worst distribution test accuracy and improves fairness across devices compared to DSGD.
\section{Experiments} \label{eval}
In this section, we validate our theoretical results and show the communication-efficiency, robustness and fairness of our proposed approach, DR-DSGD, compared to its non-robust counterpart DSGD. 
\subsection{Simulation Settings} 
For our experiments, we consider the image classification task using two main datasets: \textsc{Fashion MNIST} \citep{fashionmnist} and \textsc{CIFAR10} \citep{krizhevsky2009learning}. We implement DR-DSGD and DSGD algorithms using PyTorch. For \textsc{Fashion MNIST}, we use an MLP model with ReLU activations having two hidden layers with 128 and 64 neurons, respectively. For the \textsc{CIFAR10} dataset, we use a CNN model composed of three convolutional layers followed by two fully connected layers, each having 500 neurons. For each dataset, we distribute the data across the $K$ devices in a pathological non-IID way, as in \citep{fedavg}, to mimic an actual decentralized learning setup. More specifically, we first order the samples according to the labels and divide data into shards of equal sizes. Finally, we assign each device the same number of chunks. This will ensure a pathological non-IID partitioning of the data, as most devices will only have access to certain classes and not all of them. Unless explicitly stated, we choose the learning rate $\eta = \sqrt{K/T}$ and the mini-batch size $B=\sqrt{KT}$.

For the graph generation, we generate randomly a network consisting of $K$ devices with a connectivity ratio $p$ using the networkx package \citep{networkx}. The parameter $p$ measures the sparsity of the graph. While smaller values of $p$ lead to a sparser graph, the generated graph becomes denser as $p$ approaches $1$. We use the Metropolis weights to construct the mixing matrix $\bm{W}$ as follow
\begin{align*}
    W_{ij}=\left\{
         \begin{array}{ll}
           1/\left(1+\max\{d_i,d_j\}\right),     &\text{ if } (j,i)\in \mathcal{E},\\
           0,                              &\text{ if } (j,i)\notin \mathcal{E} \text{ and } j\neq i,\\
           1-\sum_{l\in\mathcal{N}_i} W_{il},&\text{ if } j=i,
         \end{array}
       \right.
\end{align*}
Unless otherwise stated, the graphs used in our experiments are of the Erdős-Rényi type.
\begin{figure*}[t!]
		\centering
		\subfigure[]{
		\centering
		\includegraphics[width=0.31\textwidth]{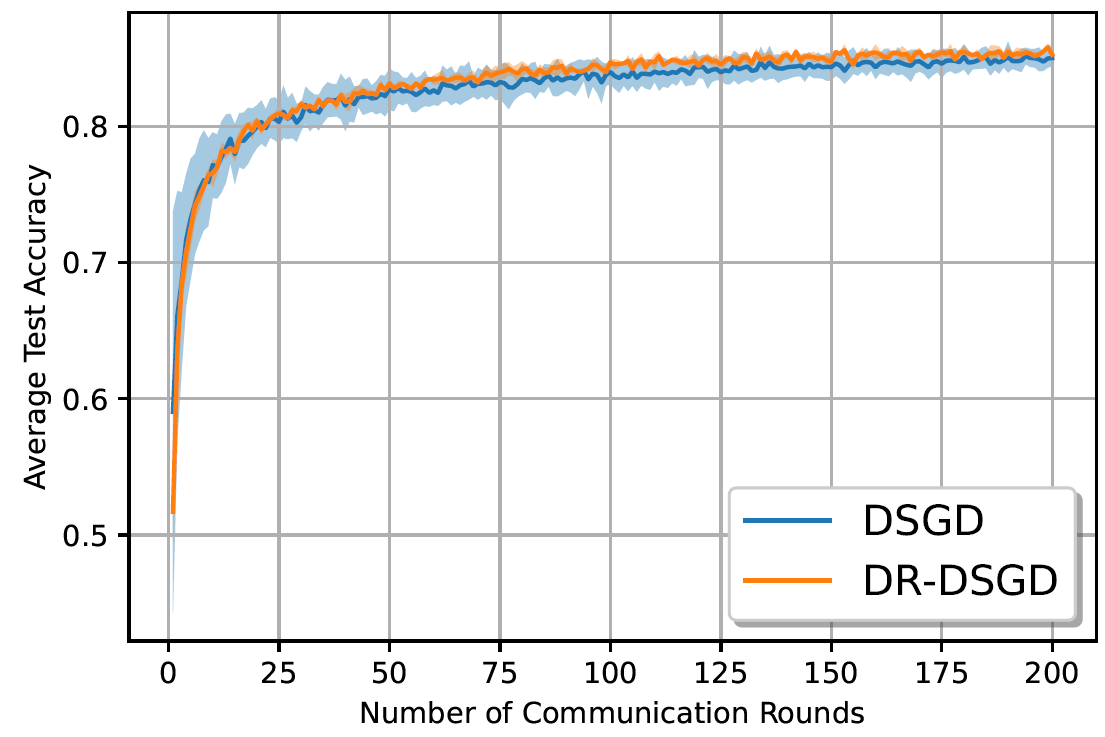}
		\label{fig1:a}
		}
		\hfill
		\subfigure[]{
			\centering 
			\includegraphics[width=0.31\textwidth]{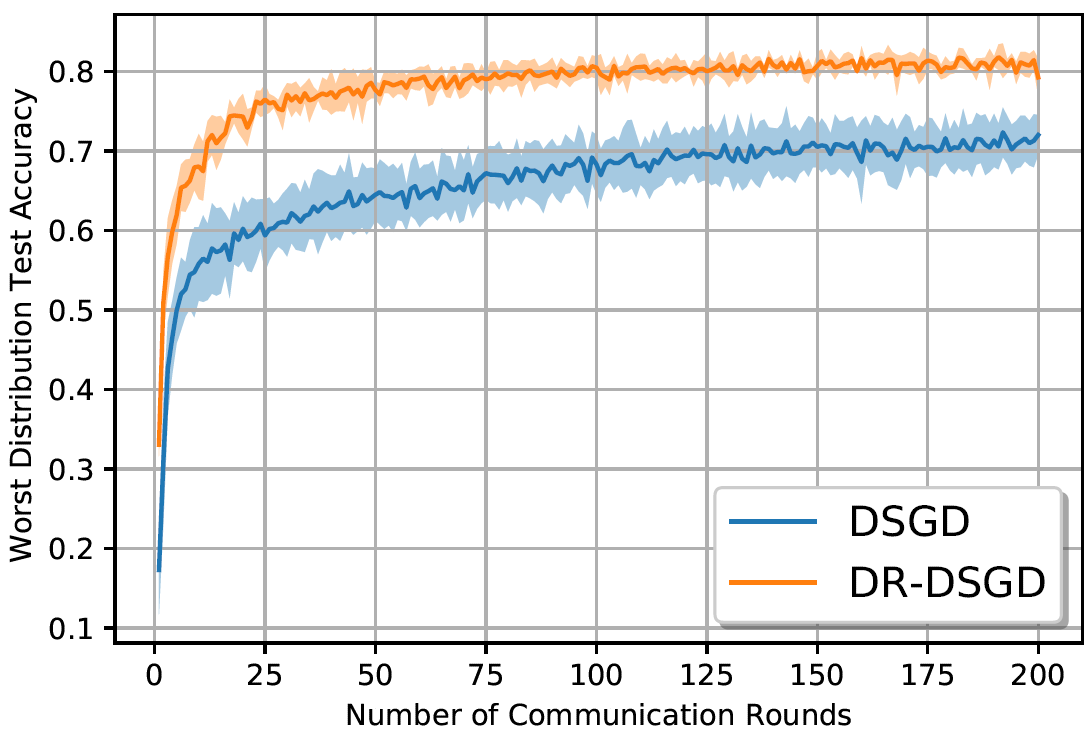}
			\label{fig1:b}
			}
			\hfill
		\subfigure[]{
			\centering 
			\includegraphics[width=0.31\textwidth]{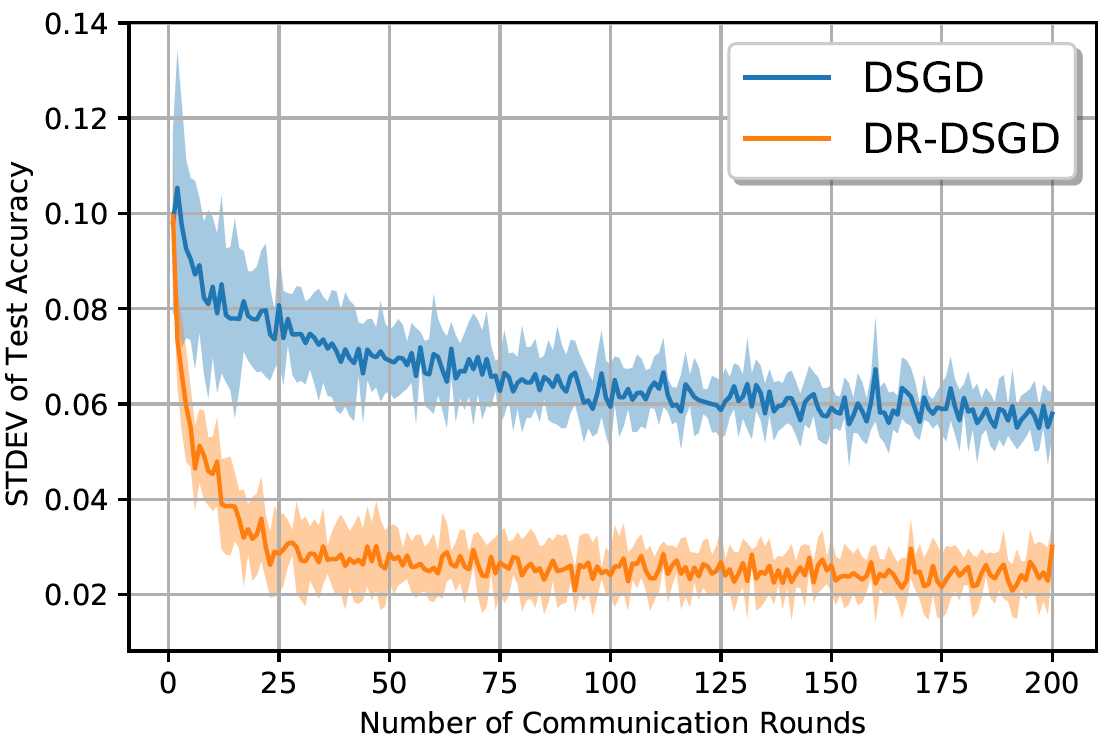}
			\label{fig1:c}
			}
    \caption{Performance comparison between DR-DSGD and DSGD in terms of: (a) average test accuracy, (b) worst test accuracy, and (c) STDEV of test accuracy for \textsc{Fashion MNIST} dataset.}
    \label{fig1}
\end{figure*}

\begin{figure*}[t!]
		\centering
		\subfigure[]{
		\centering
		\includegraphics[width=0.31\textwidth]{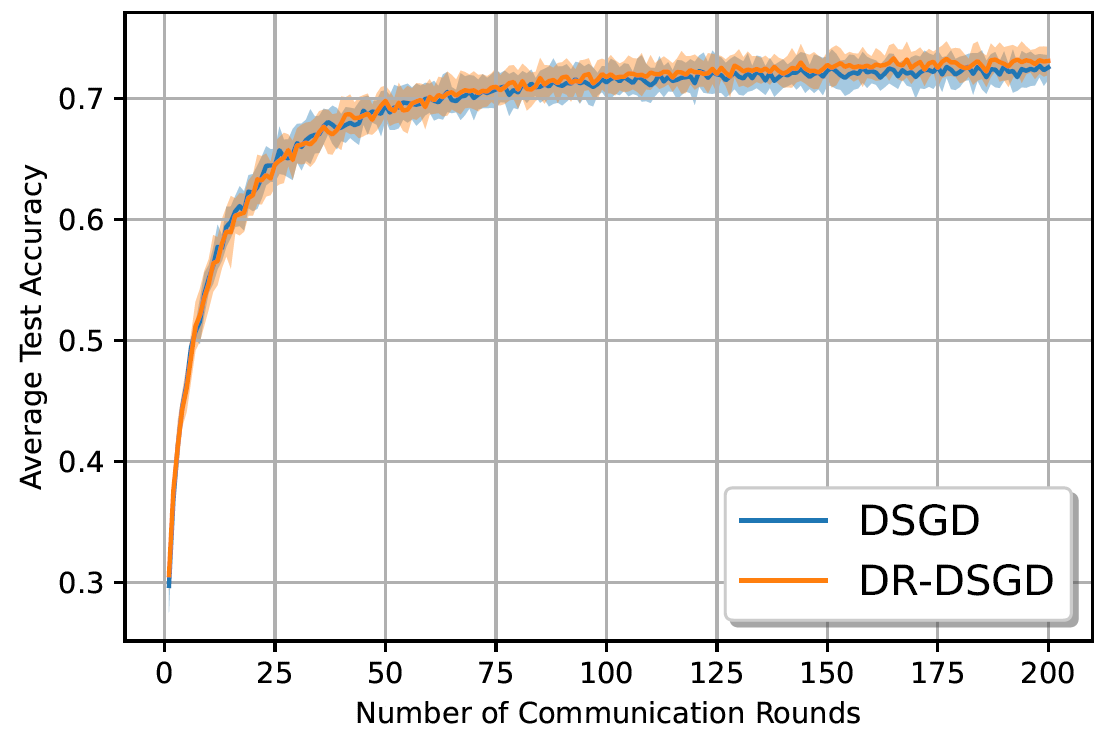}
		\label{fig2:a}
		}
		\hfill
		\subfigure[]{
			\centering 
			\includegraphics[width=0.31\textwidth]{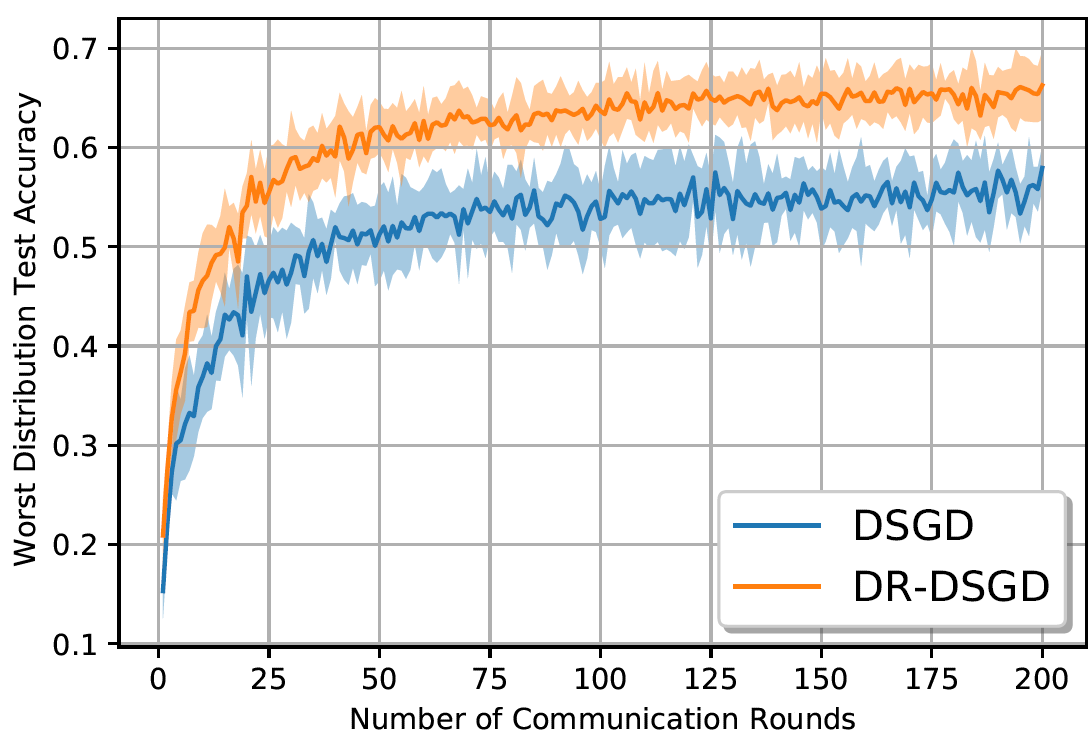}
			\label{fig2:b}
			}
			\hfill
		\subfigure[]{
			\centering 
			\includegraphics[width=0.31\textwidth]{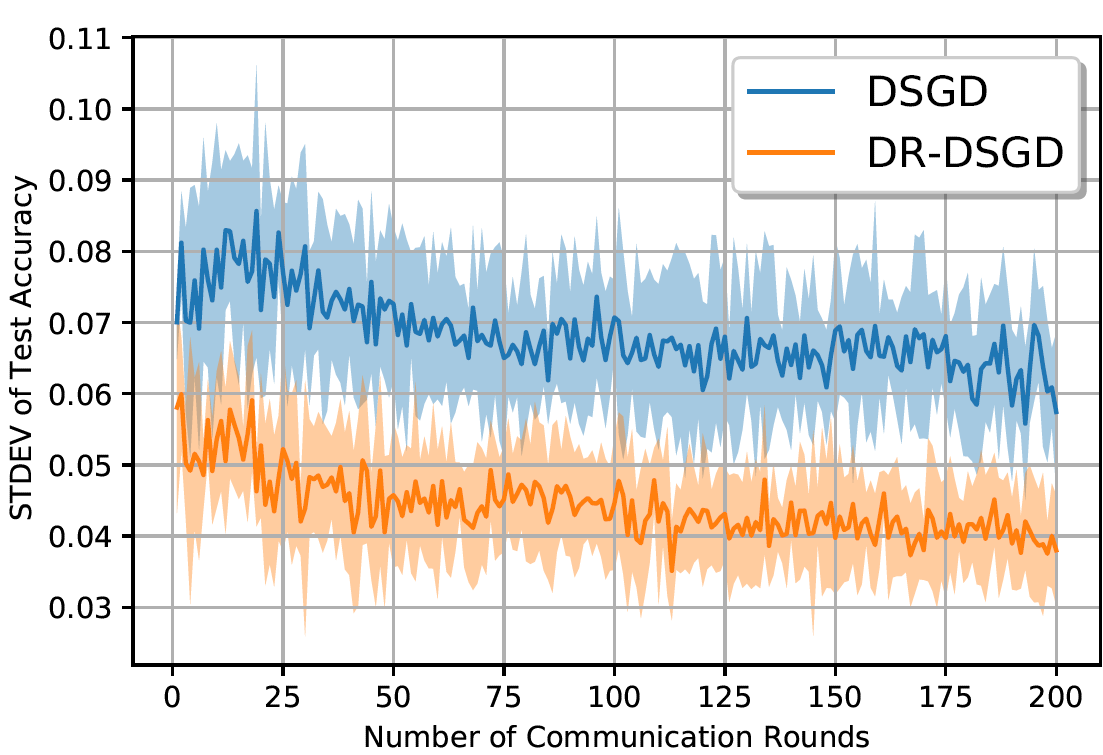}
			\label{fig2:c}
			}
      \caption{Performance comparison between DR-DSGD and DSGD in terms of: (a) average test accuracy, (b) worst test accuracy, and (c) STDEV of test accuracy for \textsc{CIFAR10} dataset.}
  \label{fig2} 
\end{figure*}

\subsection{Robustness \& Communication-Efficiency} 
In this section, we consider $K=10$ devices and $\mu=6$. For \textsc{Fashion MNIST}, we consider a value of $p=0.3$ while we take $p=0.5$ for \textsc{CIFAR10}. For each experiment, we report both the average test accuracy, the worst distribution test accuracy, and their corresponding one standard error shaded area based on five runs. The worst distribution test accuracy is defined as the worst of all test accuracies. The performance comparison between DR-DSGD and DSGD for \textsc{Fashion MNIST} and \textsc{CIFAR10} dataset is reported in Figs. \ref{fig1} and \ref{fig2}, respectively. Both experiments show that while DR-DSGD achieves almost the same average test accuracy as DSGD, it significantly outperforms DSGD in terms of the worst distribution test accuracy. For the gap between both algorithms, the improvement, in terms of the worst distribution test accuracy, is of the order of $7\%$ for \textsc{Fashion MNIST} and $10\%$ for \textsc{CIFAR10} datasets, respectively. This is mainly due to the fact that while the ERM objective sacrifices the worst-case performance for the average one, DRO aims to lower the variance while maintaining good average performance across the different devices. Not only our proposed algorithm achieves better performance than DSGD, but it is also more communication-efficient. In fact, for the same metric requirement, DR-DSGD requires fewer communication rounds than DSGD. Since our approach exponentially increases the weight of high training loss devices, it converges much faster than DSGD. For instance, in the experiment using the \textsc{Fashion MNIST} dataset, DR-DSGD requires $10\times$ fewer iterations than DSGD to achieve $70\%$ worst distribution test accuracy. Finally, in each experiment, we plot the standard deviation (STDEV) of the different devices' test accuracies for both algorithms. We can see from both Figs. \ref{fig1:c} and \ref{fig2:c} that DR-DSGD has a smaller STDEV compared to DSGD, which reflects that DR-DSGD promotes more fairness among the devices.

\subsection{Fairness} 
From this section on, we consider $K=25$. To investigate the fairness of the performance across the devices, we run the experiments on \textsc{Fashion MNIST} and \textsc{CIFAR10} datasets reporting the final test accuracy on each device in the case when $\mu=9$. In Figs. \ref{fig3:a} and \ref{fig3:b}, we plot the worst test accuracy distribution across devices. We note that DR-DSGD results in a more concentrated distribution in both experiments, hence a fairer test accuracy distribution with lower variance. For instance, DR-DSGD reduces the variance of accuracies across all devices by $60\%$ on average for the \textsc{Fashion MNIST} experiment while keeping almost the same average accuracy.

\begin{figure}[t!]
		\centering
		\subfigure[]{
		\centering
		\includegraphics[scale=0.4]{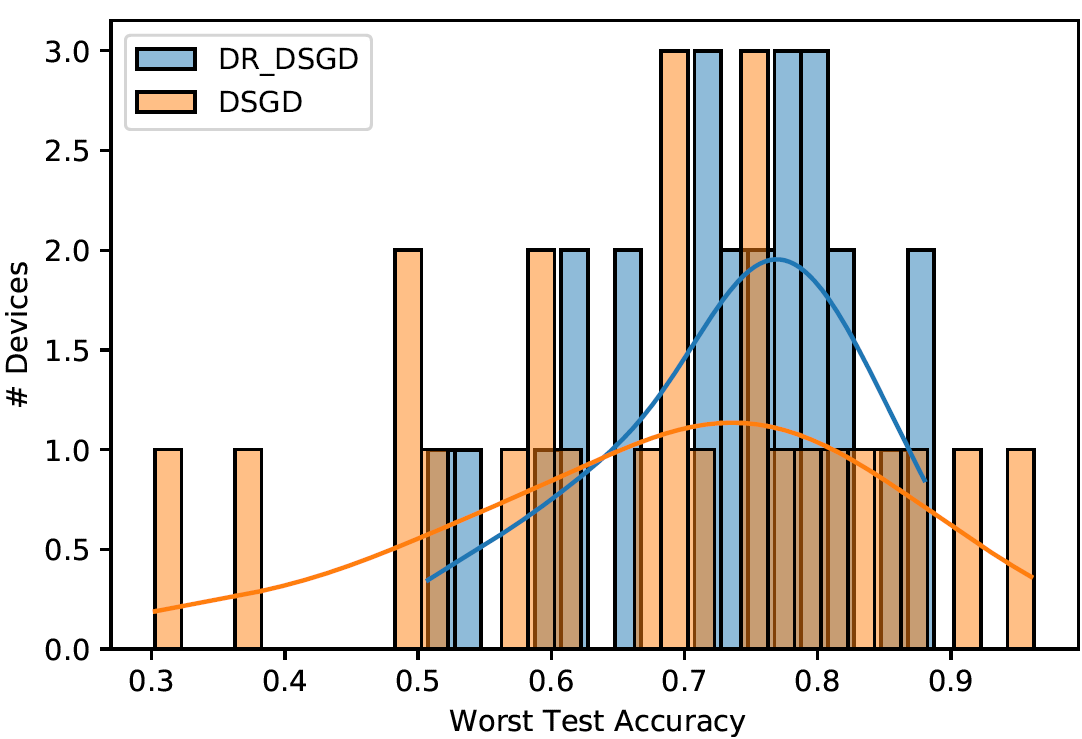}
		\label{fig3:a}
		}
		\hfill
		\subfigure[]{
			\centering 
			\includegraphics[scale=0.4]{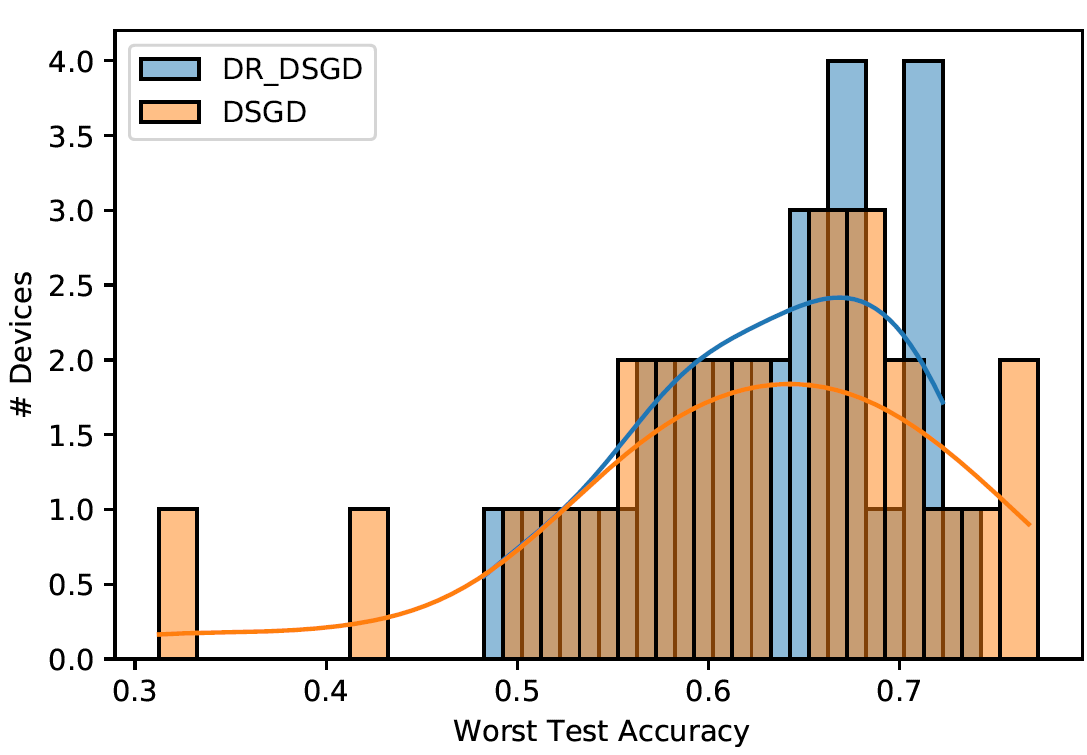}
			\label{fig3:b}
			}
      \caption{Performance comparison between DR-DSGD and DSGD in terms of worst test accuracy distribution for: (a) \textsc{Fashion MNIST} and (b) \textsc{CIFAR10} datasets.}
  \label{fig3} 
\end{figure}
\subsection{Tradeoff Between Fairness \& Average Test Accuracy} In this section, we show how $\mu$ controls the trade-off between fairness and average test accuracy. To this end, we report, in Table \ref{table:1}, the average, and worst $10\%$ test accuracy, as well as the STDEV based on five runs for $T=300$ and for different values of $\mu$ for both datasets. As expected, higher values of $\mu$ give more weight to the regularization term; hence driving the values of $\lambda_i$ closer to the average weight $1/K$. Therefore, as $\mu$ increases, the average test accuracy increases but the worst ($10\%$) test accuracy decreases. Conversely, the worst test accuracy increases as the value of $\mu$ decreases at the cost of a drop in the average test accuracy. Furthermore, the STDEV decreases for smaller values of $\mu$ ensuring a fairer performance across devices. 

\subsection{Impact of Graph Topology} 
In this section, we start by inspecting the effect of the graph sparsity on the performance of DR-DSGD and DSGD in terms of the worst distribution test accuracy by considering different connectivity ratios $p \in \{0.3, 0.45, 0.6\}$ and for $\mu=6$. The results are reported in Fig \ref{fig4} for both datasets for $\mu=6$. We can see that as the graph becomes denser, i.e. as $p$ increases, the performance of both algorithms improves in terms of the worst test distribution. Nonetheless, it is clear that DR-DSGD outperforms DSGD for three different values of $p$ for both datasets. Next, we explore the performance of both algorithms on several other types of graph-types, specifically geometric (Fig. \ref{fig5:a}), ring (Fig. \ref{fig5:b}) and grid graphs (Fig. \ref{fig5:c}). The first row represents the graph topology, while the second represents the worst distribution test accuracy as a function of the number of communication rounds for \textsc{Fashion MNIST}. We can see that DR-DSGD outperforms DSGD for the three graphs considered in Fig. \ref{fig5} by achieving a higher worst distribution test accuracy. Furthermore, we note that both algorithms converge faster as the graph becomes denser. For instance, both algorithms require fewer communication rounds when the graph topology is geometric (Fig. \ref{fig5:a}) compared to the ring graph (Fig. \ref{fig5:b}).

\begin{table}[t]
\caption{Statistics of the test accuracy distribution for different values of $\mu$.}\label{table:1}
\centering
\begin{tabular}{ l | c | ccc}
    \toprule
    \multirow{2}{*}{\textbf{Dataset}} & \multirow{2}{*}{$\mu$} & \textbf{Average} & \textbf{\small{Worst 10\%}} & \textbf{STDEV} \\
    & & (\%) & (\%) &  \\
    \hline
    \multirow{3}{*}{\textsc{FMNIST}} & $\mu=2$ & 71.5 {\scriptsize $\pm$ 1.3}  & \textbf{49.1} {\scriptsize $\pm$ 2.4} &  \textbf{11.4} {\scriptsize $\pm$ 0.3} \\
    & $\mu=3$ & 72.3 {\scriptsize $\pm$ 1.1}  & 48.8 {\scriptsize $\pm$ 3}  &  11.8 {\scriptsize $\pm$ 1.5} \\
    & $\mu=5$ & \textbf{73.4} {\scriptsize $\pm$ 2.4}  & 44.5 {\scriptsize $\pm$ 4.4}  &  13.4  {\scriptsize $\pm$ 2.1} \\
    \hline
    \multirow{3}{*}{\textsc{CIFAR10}} & $\mu=2$ & 57.2 {\scriptsize $\pm$ 2.4}  & \textbf{50.9} {\scriptsize $\pm$ 1.5} &  \textbf{10.3} {\scriptsize $\pm$ 0.6} \\
    & $\mu=3$ & 59.83 {\scriptsize $\pm$ 1.6}  & 48.9 {\scriptsize $\pm$ 1.8}  &  10.9 {\scriptsize $\pm$ 0.4} \\
    & $\mu=5$ & \textbf{61} {\scriptsize $\pm$ 1.4}  & 44.9 {\scriptsize $\pm$ 1.9}  &  11.2  {\scriptsize $\pm$ 1.3} \\
    \bottomrule
\end{tabular}
\end{table}

\begin{figure}[t!]
		\centering
		\subfigure[]{
		\centering
		\includegraphics[scale=0.5]{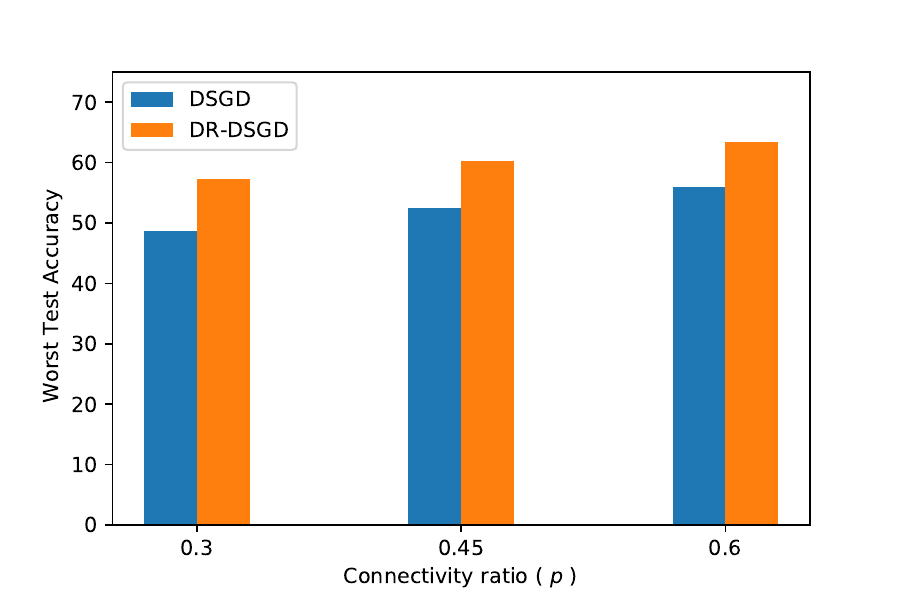}
		\label{fig4:a}
		}
		\hfill
		\subfigure[]{
			\centering 
			\includegraphics[scale=0.5]{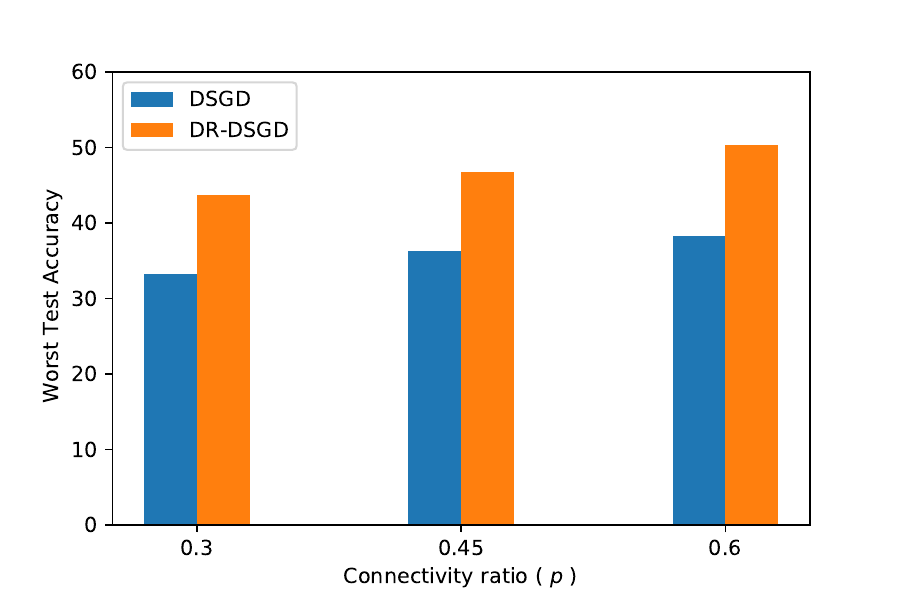}
			\label{fig4:b}
			}
      \caption{Performance comparison between DR-DSGD and DSGD in terms of worst test accuracy distribution for different values of $p$ for: (a) \textsc{Fashion MNIST} and (b) \textsc{CIFAR10} datasets.}
  \label{fig4} 
\end{figure}

\begin{figure*}[t!]
		\centering
		\subfigure[]{
		\centering
		\includegraphics[width=0.31\textwidth]{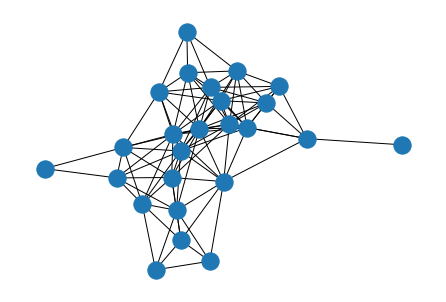}
		\label{fig5:a}
		}
		\hfill
		\subfigure[]{
			\centering 
			\includegraphics[width=0.31\textwidth]{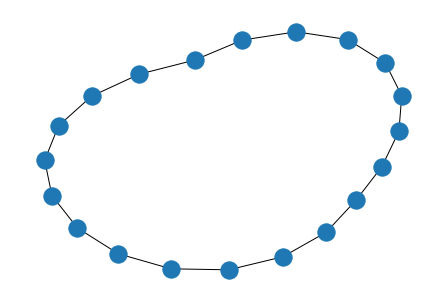}
			\label{fig5:b}
			}
			\hfill
		\subfigure[]{
			\centering 
			\includegraphics[width=0.31\textwidth]{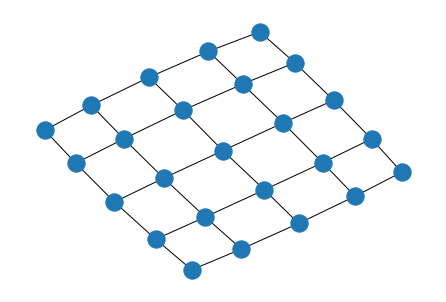}
			\label{fig5:c}
			}
		\subfigure{
		\centering
		\includegraphics[width=0.31\textwidth]{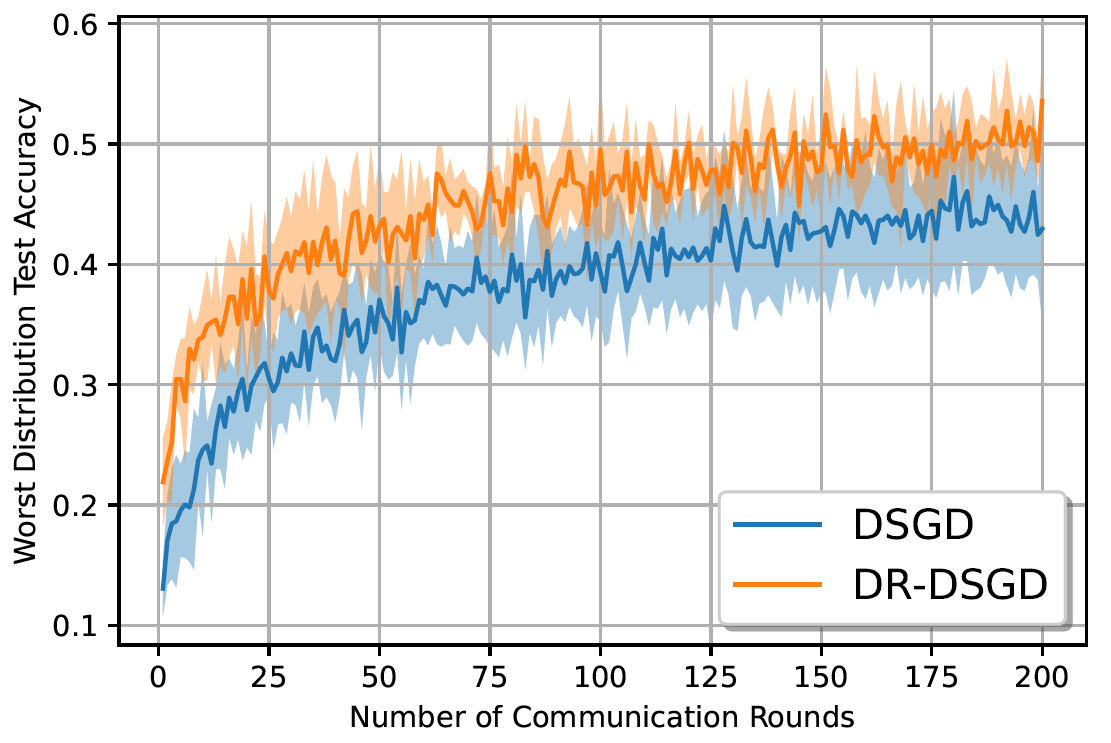}
		\label{fig5:d}
		}
		\hfill
		\subfigure{
			\centering 
			\includegraphics[width=0.31\textwidth]{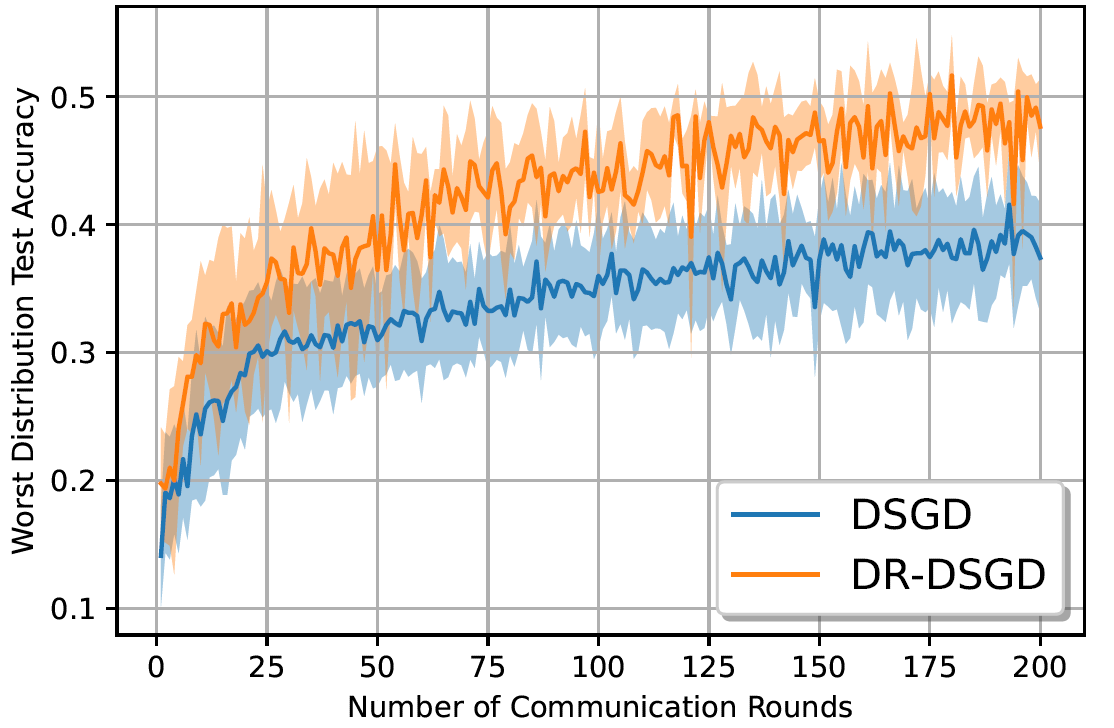}
			\label{fig5:e}
			}
			\hfill
		\subfigure{
			\centering 
			\includegraphics[width=0.31\textwidth]{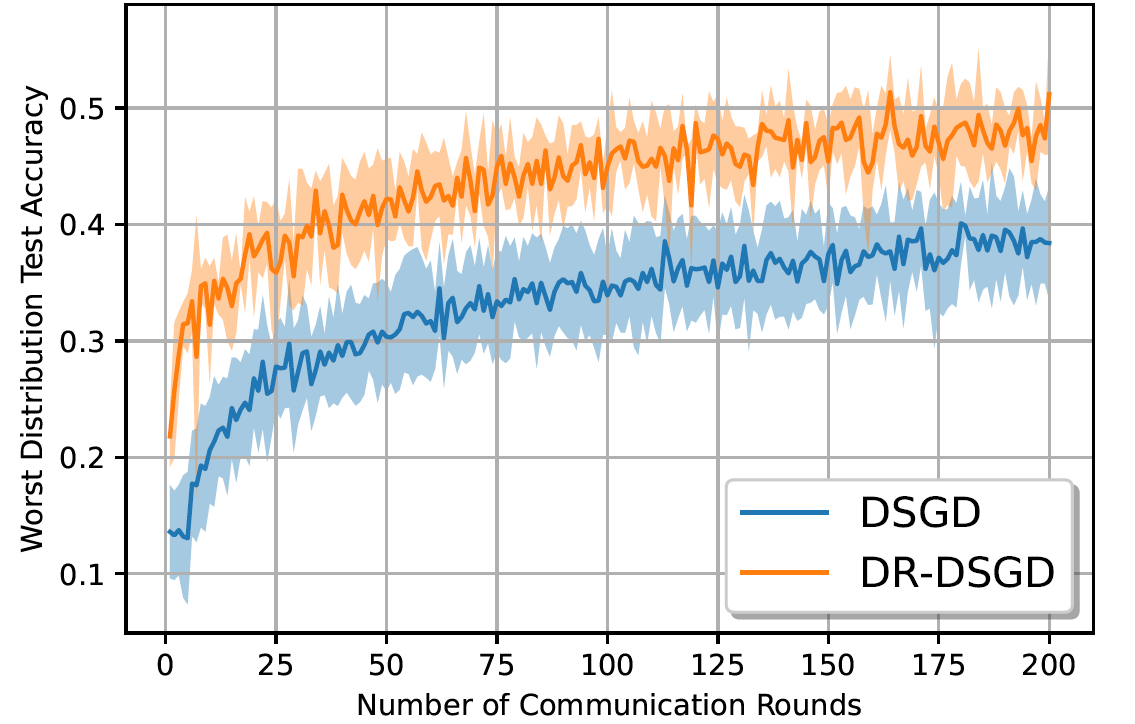}
			\label{fig5:f}
			}
    \caption{Performance comparison between DR-DSGD and DSGD in terms of worst test accuracy for: (a) geometric graph, (b) ring graph, and (c) grid graph for \textsc{Fashion MNIST} dataset.}
    \label{fig5}
\end{figure*}
\subsection{Limitations}\label{lim}
In this section, we highlight some of the limitations of our approach, while outlining several potential directions for future work.
\begin{itemize}
    \item \textbf{Solving the unregularized distributionally robust problem:} Instead of solving (\ref{eq:3}), we propose DR-DSGD to solve its regularized form given in (\ref{eq:3}). This is mainly motivated by the fact that solving the min-max problem in (\ref{eq:3}) in a decentralized fashion is a challenging task. A weakness of the convergence analysis is that the derived convergence rate is obtained only for values of $\mu$ larger than one. For $\mu \rightarrow 0$, the bound, derived in Theorem \ref{thm1}, grows to infinity.
    \item \textbf{Relaxing some of the assumptions:} As pointed out in \citep{khaled2020tighter}, the bounded gradient assumption has been criticised in the FL literature. Adopting a more reasonable assumption is left for future work.
\end{itemize}
\section{Conclusion}\label{SecConc}
This paper proposes a distributionally robust decentralized algorithm, DR-DSGD, that builds upon the decentralized stochastic gradient descent (DSGD) algorithm. The proposed framework is the first to solve the distributionally robust learning problem over graphs. Simulation results indicate that our proposed algorithm is more robust across heterogeneous data distributions while being more communication-efficient than its non-robust counterpart, DSGD. Furthermore, the proposed approach ensures fairer performance across all devices compared to DSGD. 
\section*{Broader Impact Statement}
This paper proposes a distributionally-robust decentralized learning algorithm. Our approach minimizes the maximum loss among the worst-case distributions across devices' data. As a result, even if the data distribution across devices is significantly heterogeneous, DR-DSGD guarantees a notion of fairness across devices. More specifically, the proposed algorithm ensures that the trained model has almost similar performance for all devices, rather than just performing well on a few of them while doing poorly on other devices.
\section*{Acknowledgments} 
This work was supported in part by the Academy of Finland
6G Flagship under grant No. 318927, in part by project SMARTER, in part
by projects EU-ICT IntellIoT under grant No. 957218, EUCHISTERA LearningEdge, CONNECT, Infotech-NOOR, and NEGEIN. We would like also to thank the
anonymous reviewers for their insightful comments that led to improving the manuscript.
\bibliography{tmlr}
\bibliographystyle{plainnat}

\appendix
\section{Appendix}
\subsection{Worst-case Bound on Categorical Cross-entropy}\label{Appendix:bound}
Let's first examine the behavior of a randomly initialized network. With random weights, the many units/layers will usually compound to result in the network outputting approximately uniform predictions. In a classification problem with $M$ classes, we will get probabilities of around $1/M$ for each category. In fact, the cross-entropy for a single data point is defined as
\begin{align}
    CE = -\sum_{m=1}^M y_m \log(\hat{y}_m),
\end{align}
where $y$ are the true probabilities (labels), $\hat{y}$ are the predictions. With hard labels (i.e. one-hot encoded), only a single $y_m$ is 1, and all others are 0. Thus, $CE$ reduces to $-\log(\hat{y}_m)$, where $m$ is now the correct class. With a randomly initialized network, we have $\hat{y}_m \sim 1/M$, therefore, we get $-\log(1/M) = \log(M)$. Since the training objective is usually to reduce cross-entropy, we can think of $\log(M)$ as a worst-case bound.
\subsection{Proximity of the Solutions of (\ref{eq:4}) to (\ref{eq:3})}\label{Appendix:prox}
In this appendix, we discuss briefly the Proximity of the solutions of (\ref{eq:4}) to (\ref{eq:3}) in the \textbf{convex} case. According to \citep{qian2019robust}, problem (\ref{eq:4}) can be re-written using the duality theory as
\begin{align}
    \min_{\Theta \in \mathbb{R}^d} \max_{\lambda \in \Delta} \sum_{i=1}^K \lambda_i f_i(\Theta) \\
    s.t. ~ \phi(\lambda,1/K) \leq \tau 
\end{align}
where to every $\mu$, we associate a $\tau$ value. From \citep{namkoong2016stochastic, duchi2016statistics}, for \textbf{convex} loss functions, we have that
\begin{align}
    \max_{\lambda \in \mathcal{P}_{\rho,K}} \sum_{i=1}^K \lambda_i \ell_i(\Theta) = \frac{1}{K} \sum_{i=1}^K  \ell_i(\Theta) + \sqrt{\tau Var_{P_0}[\ell(\Theta, \xi)]} + o_{P_0}(K^{-\frac{1}{2}}),
\end{align}
where $\ell(\Theta) = [\ell_1(\Theta), \dots, \ell_K(\Theta)]^T \in \mathbb{R}^K$ is the vector of losses, $\mathcal{P}_0$ is the empirical probability distribution,  and the ambiguity set $\mathcal{P}_{\rho,K}$ is defined as $\mathcal{P}_{\rho,K} = \{\lambda \in \mathbb{R}^K \sum_{i=1}^K \lambda_i = 1, \lambda \geq 0, \phi(\lambda,1/K) \leq \tau \}$. 
Note that problem (\ref{eq:3}) is equivalent to (\ref{eq:4}) when $\tau = 0$. Thus, we can link the objective of the problem (\ref{eq:4}) to (\ref{eq:3}) as
\begin{align}
    \min_{\Theta \in \mathbb{R}^d} \max_{\lambda \in \mathcal{P}_{\rho,K}} \sum_{i=1}^K \lambda_i f_i(\Theta) = \min_{\Theta \in \mathbb{R}^d} \max_{\lambda \in \mathcal{P}_{0,K}} \sum_{i=1}^K \lambda_i f_i(\Theta) +  \sqrt{\tau Var_{P_0}[\ell(\Theta, \xi)]}.
\end{align}
\subsection{Basic identities and inequalities}\label{ineq}
We start by summarizing the main identities and inequalities used in the proof. Let $\{\bm{a}_s\}_{s=1}^S$ be a sequence of vectors in $\mathbb{R}^d$, $b_1$ and $b_2$ two scalars, $\bm{C}_1$ and $\bm{C}_2$ two matrices, and $\epsilon>0$, then we have
\begin{align}\label{id1}
    \left\|\sum_{s=1}^S \bm{a}_s\right\|^2 \leq S \sum_{s=1}^S \|\bm{a}_s\|^2.
\end{align}
\begin{align}\label{id2}
    2 \langle \bm{a}_1, \bm{a}_2 \rangle = \|\bm{a}_1\|^2 + \|\bm{a}_2\|^2-\|\bm{a}_1-\bm{a}_2\|^2.
\end{align}
\begin{align}\label{id3}
    2 b_1 b_2 \leq \frac{b_1^2}{\epsilon} + \epsilon b_2^2, \forall \epsilon > 0.
\end{align}
\begin{align}\label{id4}
    \text{(Cauchy-Schwarz)} \quad |\text{Tr}\{\bm{C}_1 \bm{C}_2\}| \leq \|\bm{C}_1\|_F \|\bm{C}_2\|_F.
\end{align}
\subsection{Proof of Lemma \ref{lemma1}}\label{proofLemma1}
Let $\bm{a}_i^T$ denote the i$^{th}$ row vector of matrix $\bm{A}$ and $\bm{e}_i$ the i$^{th}$ vector of the canonical basis of $\mathbb{R}^K$, then we can write
\begin{align}\label{lemeq1}
    \nonumber &\mathbb{E}\left[\|\bm{A} \left( \bm{W}^n- \bm{J}\right)\|_F^2\right]\\
    \nonumber &= \sum_{i=1}^K \left\|\bm{a}_i^T\left(\bm{W}^n \bm{e}_i - \frac{\bm{1}}{K}\right)\right\|^2\\
    &\leq \sum_{i=1}^K \|\bm{a}_i^T\|^2 \left\|\bm{W}^n \bm{e}_i - \frac{\bm{1}}{K}\right\|^2.
\end{align}
From \citep[Lemma 5]{lian2017can}, we have
\begin{align}\label{lemeq2}
\left\|\bm{W}^n \bm{e}_i - \frac{\bm{1}}{K}\right\|^2 \leq \rho^n.    
\end{align}
Replacing \eqref{lemeq2} in \eqref{lemeq1}, we get
\begin{align}
\mathbb{E}\left[\|\bm{A} \left( \bm{W}^n- \bm{J}\right)\|_F^2\right] \leq \rho^n \|\bm{A}\|_F^2.    
\end{align}
Hence, the proof is completed.
\subsection{Proof of Lemma \ref{lemma3}}\label{proofLemma3}
From Assumption 4, we have that the function $\theta \mapsto \ell(\theta, \xi)$ is bounded, then $\theta \mapsto \exp(\ell(\theta, \xi))$ is $C^1$ on a compact, and as a consequence, there exists $L_e > 0$ such that
\begin{align}\label{le22}
    \mathbb{E}[|\exp\left(z_1\right)\!-\! \exp\left(z_2\right) |] \leq L_e |z_1-z_2|,
\end{align}
where $z_i = \ell(\theta_i, \xi)$. Next, let $z_1 > z_2$ and define $t_i = \exp(z_i)$, then we can write
\begin{align}
    \mathbb{E}[t_1^{\frac{1}{\mu}}- t_2^{\frac{1}{\mu}}] = \frac{1}{\mu} \mathbb{E}\left[\int_{t_2}^{t_1} x^{\frac{1}{\mu}-1} dx\right] \leq \frac{\exp\left((\frac{1}{\mu}-1)M\right)}{\mu}  \mathbb{E}[t_1-t_2] = \frac{\exp\left((\frac{1}{\mu}-1)M\right)}{\mu} \mathbb{E}\left[\exp\left(z_1\right)-\exp\left(z_2\right)\right].
\end{align}
Using (\ref{le22}), we get
\begin{align}
    \mathbb{E}[t_1^{\frac{1}{\mu}}- t_2^{\frac{1}{\mu}}] \leq \exp\left((\frac{1}{\mu}-1)M\right) L_e \mathbb{E}\left[\frac{z_1}{\mu}-\frac{z_2}{\mu}\right]
\end{align}
A similar result can be obtained if we considered $z_1 < z_2$, hence, we can write
\begin{align}
    \mathbb{E}\left[|t_1^{\frac{1}{\mu}}- t_2^{\frac{1}{\mu}}|\right] \leq \exp\left((\frac{1}{\mu}-1)M\right) L_e \mathbb{E}\left[\left|\frac{z_1}{\mu}-\frac{z_2}{\mu}\right|\right]. 
\end{align}
Using the definition of $\mathcal{Y}$, we can write
\begin{align}
    \mathbb{E}\left[|\exp(y_1) - \exp(y_2)|\right] \leq L_2(\mu) |y_1-y_2|,
\end{align}
where $L_2(\mu) := \exp\left((\frac{1}{\mu}-1)M\right) L_e$ and hence the proof of statement $(a)$.\\
For statement $(b)$, using Hoeffding's lemma, we can write
\begin{align}
    \mathbb{E}\left[\exp(\ell(\theta, \xi_i)/\mu) \right]\leq \exp\left(\frac{\mathbb{E}\left[\ell(\theta, \xi_i)\right]}{\mu} + \frac{M^2}{2 \mu^2}\right) \leq \exp\left(\frac{M}{\mu} + \frac{M^2}{2 \mu^2}\right).
\end{align}
By choosing $G_2(\mu) := \exp\left(\frac{M}{\mu} + \frac{M^2}{2 \mu^2}\right)$, we prove statement $(b)$. \\
Finally, for statement $(c)$, we can write
\begin{align}
    \mathbb{E}\left[|\exp(\ell(\bm{\theta}, \xi_i)/\mu)-\exp(f_i(\bm{\theta})/\mu)|^2\right] \leq 2 \left( \mathbb{E}\left[\exp(2 \ell(\bm{\theta}, \xi_i)/\mu)\right] +  \exp(2 f_i(\bm{\theta})/\mu)\right).
\end{align}
Using Assumption 4 and Hoeffding's lemma, we get
\begin{align}
    \mathbb{E}\left[|\exp(\ell(\bm{\theta}, \xi_i)/\mu)-\exp(f_i(\bm{\theta})/\mu)|^2\right] \leq (\sigma_{3}(\mu))^2,
\end{align}
where $(\sigma_{3}(\mu))^2 := 2 \left( \exp\left(\frac{2M}{\mu} + \frac{M^2}{\mu^2}\right) + \exp\left(\frac{2M}{\mu}\right) \right).$
\subsection{Proof of Lemma \ref{lemma2}}\label{proofLemma2}
Using the update rule (\ref{newupdate}) and the identity $\bm{W} \bm{J} = \bm{J} \bm{W} = \bm{J}$, we can write
\begin{align}\label{lem2eq1}
    \bm{\theta}^t (\bm{I}-\bm{J}) = \left(\bm{\theta}^{t-1}-\eta \bm{U}^{t-1}\right) \bm{W} (\bm{I}-\bm{J}) = \bm{\theta}^{t-1} (\bm{I}-\bm{J}) \bm{W} - \eta \bm{U}^{t-1} \bm{W} (\bm{I}-\bm{J}).
\end{align}
Writing (\ref{lem2eq1}) recursively, we get
\begin{align}
    \bm{\theta}^t (\bm{I}-\bm{J}) = \bm{\theta}^{0} (\bm{I}-\bm{J}) \bm{W}^{t} - \eta \sum_{\tau=0}^{t-1} \bm{U}^{\tau} \left(\bm{W}^{t-\tau}-\bm{J}\right) = - \eta \sum_{\tau=0}^{t-1} \bm{U}^{\tau} \left(\bm{W}^{t-\tau}-\bm{J}\right),
\end{align}
where we used the fact that all local models are initiated at the same point, i.e. $\bm{\theta}^{0} (\bm{I}-\bm{J}) \bm{W}^{t} = \bm{0}$.\\
Thus, we can write
\begin{align}\label{lem2eq2}
    \nonumber & \frac{1}{KT} \sum_{t=1}^T\mathbb{E}\left[\|\bm{\theta}^{t}(\bm{I}-\bm{J})\|_F^2 \right]\\
    \nonumber & = \frac{\eta^2}{KT} \sum_{t=1}^T \mathbb{E}\left[\left\|\sum_{\tau=0}^{t-1} \bm{U}^{\tau} \left(\bm{W}^{t-\tau}-\bm{J}\right)\right\|_F^2\right] \\
    \nonumber & = \frac{\eta^2}{KT} \sum_{t=1}^T \mathbb{E}\left[\left\|\sum_{\tau=0}^{t-1} \left(\bm{U}^{\tau}-\nabla \bm{F}^\tau + \nabla \bm{F}^\tau \right) \left(\bm{W}^{t-\tau}-\bm{J}\right)\right\|_F^2\right] \\
    & \leq \frac{2 \eta^2}{KT} \sum_{t=1}^T \mathbb{E}\left[\left\|\sum_{\tau=0}^{t-1} \left(\bm{U}^{\tau}-\nabla \bm{F}^\tau \right) \left(\bm{W}^{t-\tau}-\bm{J}\right)\right\|_F^2\right] + \frac{2 \eta^2}{KT} \sum_{t=1}^T \mathbb{E}\left[\left\|\sum_{\tau=0}^{t-1} \nabla \bm{F}^\tau \left(\bm{W}^{t-\tau}-\bm{J}\right)\right\|_F^2\right],
\end{align}
where we used (\ref{id1}) (for $S = 2$) in the last inequality. Let $\bm{B}_{\tau,t} = \bm{W}^{t-\tau}-\bm{J}$. We start by examining the first term of (\ref{lem2eq2}) by writing
\begin{align}\label{lem2eq3}
\nonumber & \mathbb{E}\left[\left\|\sum_{\tau=0}^{t-1} \left(\bm{U}^{\tau}-\nabla \bm{F}^{\tau} \right) \bm{B}_{\tau,t}\right\|_F^2\right]\\
\nonumber & = \sum_{\tau=0}^{t-1} \mathbb{E}\left[\left\|\left(\bm{U}^{\tau}-\nabla \bm{F}^{\tau} \right) \bm{B}_{\tau,t}\right\|_F^2\right] + \sum_{\tau=0}^{t-1} \sum_{\substack{\tau'=0,\\ \tau' \neq \tau}}^{t-1} \mathbb{E}\left[\text{Tr}\{\bm{B}_{\tau,t}^T (\bm{U}^{\tau}-\nabla \bm{F}^{\tau})^T (\bm{U}^{\tau'}-\nabla \bm{F}^{\tau'}) \bm{B}_{\tau',t}\}\right]\\
\nonumber & \leq \sum_{\tau=0}^{t-1} \mathbb{E}\left[\left\|\bm{U}^{\tau}-\nabla \bm{F}^{\tau} \right\|_F^2 \left\|\bm{B}_{\tau,t}\right\|_F^2\right]+ \sum_{\tau=0}^{t-1} \sum_{\substack{\tau'=0,\\ \tau' \neq \tau}}^{t-1} \mathbb{E}\left[\left\|\left(\bm{U}^{\tau}-\nabla \bm{F}^{\tau} \right) \bm{B}_{\tau,t} \right\|_F \|(\bm{U}^{\tau'}-\nabla \bm{F}^{\tau'}) \bm{B}_{\tau',t}\|_F\right]\\
& \leq \sum_{\tau=0}^{t-1} \rho^{t-\tau} \mathbb{E}\left[\left\|\bm{U}^{\tau}-\nabla \bm{F}^{\tau} \right\|_F^2 \right]+\sum_{\tau=0}^{t-1}\sum_{\substack{\tau'=0,\\ \tau' \neq \tau}}^{t-1} \left\{ \frac{\rho^{t-\tau}}{2\epsilon} \mathbb{E}\left[\|\bm{U}^{\tau}-\nabla \bm{F}^{\tau}\|_F^2\right]+\frac{\epsilon \rho^{t-\tau'}}{2} \mathbb{E}\left[\left\|\bm{U}^{\tau'}-\nabla \bm{F}^{\tau'} \right\|_F^2\right]\right\},
\end{align}
where we have used Lemma \ref{lemma1} and inequalities (\ref{id3}) and (\ref{id4}). Setting $\epsilon = \rho^{\frac{\tau'-\tau}{2}}$, we can further write (\ref{lem2eq3}) as
\begin{align}\label{lem2main0}
\nonumber & \mathbb{E}\left[\left\|\sum_{\tau=0}^{t-1} \left(\bm{U}^{\tau}-\nabla \bm{F}^{\tau} \right) \bm{B}_{\tau,t}\right\|_F^2\right]\\   
\nonumber & \leq \sum_{\tau=0}^{t-1} \rho^{t-\tau} \mathbb{E}\left[\left\|\bm{U}^{\tau}-\nabla \bm{F}^{\tau} \right\|_F^2 \right]+\sum_{\tau=0}^{t-1} \sum_{\substack{\tau'=0,\\ \tau' \neq \tau}}^{t-1} \frac{\rho^{t-\frac{\tau+\tau'}{2}}}{2} \left( \mathbb{E}\left[\left\|\bm{U}^{\tau}-\nabla \bm{F}^\tau \right\|_F^2 + \left\| \bm{U}^{\tau'}-\nabla \bm{F}^{\tau'}\right\|_F^2\right]\right)\\
\nonumber & \leq \sum_{\tau=0}^{t-1} \rho^{t-\tau} \mathbb{E}\left[\left\|\bm{U}^{\tau}-\nabla \bm{F}^{\tau} \right\|_F^2 \right]+\sum_{\tau=0}^{t-1} \sum_{\substack{\tau'=0,\\ \tau' \neq \tau}}^{t-1} \rho^{t-\frac{\tau+\tau'}{2}} \mathbb{E}\left[\left\|\bm{U}^{\tau}-\nabla \bm{F}^{\tau} \right\|_F^2\right]\\
\nonumber & \leq \sum_{\tau=0}^{t-1} \rho^{t-\tau} \mathbb{E}\left[\left\|\bm{U}^{\tau}-\nabla \bm{F}^{\tau} \right\|_F^2 \right]+\sum_{\tau=0}^{t-1}  \rho^{\frac{t-\tau}{2}} \mathbb{E}\left[\left\|\bm{U}^{\tau}-\nabla \bm{F}^{\tau} \right\|_F^2\right] \sum_{\substack{\tau'=0,\\ \tau' \neq \tau}}^{t-1} \rho^{\frac{t-\tau'}{2}}\\
\nonumber &= \sum_{\tau=0}^{t-1} \rho^{t-\tau} \mathbb{E}\left[\left\|\bm{U}^{\tau}-\nabla \bm{F}^{\tau} \right\|_F^2 \right]+\sum_{\tau=0}^{t-1}  \rho^{\frac{t-\tau}{2}} \mathbb{E}\left[\left\|\bm{U}^{\tau}-\nabla \bm{F}^{\tau} \right\|_F^2\right] \left(\sum_{\tau'=0}^{t-1} \rho^{\frac{t-\tau'}{2}}-\rho^{\frac{t-\tau}{2}}\right)\\
\nonumber& \leq \sum_{\tau=0}^{t-1}  \rho^{\frac{t-\tau}{2}} \mathbb{E}\left[\left\|\bm{U}^{\tau}-\nabla \bm{F}^{\tau} \right\|_F^2\right] \sum_{\tau'=0}^{t-1} \rho^{\frac{t-\tau'}{2}}\\
& \leq \frac{\sqrt{\rho}}{1-\sqrt{\rho}} \sum_{\tau=0}^{t-1} \rho^{\frac{t-\tau}{2}} \mathbb{E}\left[\left\|\bm{U}^{\tau}-\nabla \bm{F}^{\tau} \right\|_F^2\right],
\end{align}
where in the last inequality, we used $\sum_{\tau=0}^{t-1} \rho^{\frac{t-\tau}{2}} = \sqrt{\rho}^t +  \sqrt{\rho}^{t-1} + \dots + \sqrt{\rho} \leq \frac{\sqrt{\rho}}{1-\sqrt{\rho}}$. Now, let's focus on finding an upper bound for the term $\mathbb{E}\left[\left\|\bm{U}^{\tau}\!-\!\nabla \bm{F}^{\tau} \right\|_F^2\right]$. To this end, we start by writing
\begin{align}\label{lemm2eq4}
\left\|\bm{U}^{\tau}-\nabla \bm{F}^{\tau} \right\|_F^2 = \frac{1}{\mu^2}\sum_{i=1}^K \left\|h(\bm{\theta}_i^{\tau};\mu) \bm{g}_i(\bm{\theta}_i^{\tau}) - \exp\left(\frac{f_i(\bm{\theta}_i^{\tau})}{\mu}\right) \nabla f_i(\bm{\theta}_i^{\tau}) \right\|^2.
\end{align}
Next, we can write the following
\begin{align}\label{lemm2eq5}
\nonumber &h(\bm{\theta}_i^{\tau};\mu) \bm{g}_i(\bm{\theta}_i^{\tau})-\exp\left(\frac{f_i(\bm{\theta}_i^{\tau})}{\mu}\right) \nabla f_i(\bm{\theta}_i^{\tau})\\
\nonumber &=h(\bm{\theta}_i^{\tau};\mu) \bm{g}_i(\bm{\theta}_i^{\tau})-h(\bm{\theta}_i^{\tau};\mu) \bm{g}_i(\bar{\bm{\theta}}^{\tau})+h(\bm{\theta}_i^{\tau};\mu) \bm{g}_i(\bar{\bm{\theta}}^{\tau})-h(\bm{\theta}_i^{\tau};\mu) \nabla f_i(\bar{\bm{\theta}}^{\tau})+h(\bm{\theta}_i^{\tau};\mu) \nabla f_i(\bar{\bm{\theta}}^{\tau})\\
&-\exp\left(\frac{f_i(\bm{\theta}_i^{\tau})}{\mu}\right) \nabla f_i(\bar{\bm{\theta}}^{\tau})+\exp\left(\frac{f_i(\bm{\theta}_i^{\tau})}{\mu}\right) \nabla f_i(\bar{\bm{\theta}}^{\tau})-\exp\left(\frac{f_i(\bm{\theta}_i^{\tau})}{\mu}\right) \nabla f_i(\bm{\theta}_i^{\tau}).
\end{align}
Using the decomposition (\ref{lemm2eq5}) in (\ref{lemm2eq4}) and taking the expected value while using the inequality (\ref{id1}) (for $S=4$), we get 
\begin{align}\label{lem2main1}
\nonumber & \mathbb{E}\left[\left\|\bm{U}^{\tau}-\nabla \bm{F}^{\tau} \right\|_F^2\right]\\    
\nonumber &\leq\frac{4}{\mu^2}\sum_{i=1}^K\mathbb{E}\left[\left\|h(\bm{\theta}_i^{\tau};\mu) \left(\bm{g}_i(\bm{\theta}_i^{\tau})- \bm{g}_i(\bar{\bm{\theta}}^{\tau})\right)\right\|^2\right]+\frac{4}{\mu^2}\sum_{i=1}^K\mathbb{E}\left[\left\|h(\bm{\theta}_i^{\tau};\mu) \left(\bm{g}_i(\bar{\bm{\theta}}^{\tau})-\nabla f_i(\bar{\bm{\theta}}^i)\right)\right\|^2\right]\\
\nonumber &+\frac{4}{\mu^2}\sum_{i=1}^K\mathbb{E}\left[\left\|\nabla f_i(\bar{\bm{\theta}}^{\tau})\left(h(\bm{\theta}_i^{\tau};\mu)-\exp\left(\frac{f_i(\bm{\theta}_i^{\tau})}{\mu}\right) \right)\right\|^2\right]\\
\nonumber &+\frac{4}{\mu^2}\sum_{i=1}^K\mathbb{E}\left[\left\|\exp\left(\frac{f_i(\bm{\theta}_i^{\tau})}{\mu}\right)\left(\nabla f_i(\bar{\bm{\theta}}^{\tau})-\nabla f_i(\bm{\theta}_i^{\tau})\right)\right\|^2\right]\\
\nonumber &\leq \frac{4 (G_2(\mu))^2}{\mu^2} \sum_{i=1}^K \mathbb{E}\left[\|\bm{g}_i(\bm{\theta}_i^{\tau})- \bm{g}_i(\bar{\bm{\theta}}^{\tau})\|^2\right] +\frac{4 (G_2(\mu))^2}{\mu^2}\sum_{i=1}^K \mathbb{E}\left[\|\bm{g}_i(\bar{\bm{\theta}}^{\tau})- \nabla f_i(\bar{\bm{\theta}}^{\tau})\|^2\right] \\
\nonumber &+\frac{4G_1^2}{\mu^2}\sum_{i=1}^K \mathbb{E}\left[\left|h(\bm{\theta}_i^{\tau};\mu)-\exp\left(\frac{f_i(\bm{\theta}_i^{\tau})}{\mu}\right)\right|^2\right]+\frac{4 (G_2(\mu))^2}{\mu^2}\sum_{i=1}^K \mathbb{E}\left[\|\nabla f_i(\bar{\bm{\theta}}^{\tau})- \nabla f_i(\bm{\theta}_i^{\tau})\|^2\right]\\
\nonumber &\leq \frac{8 G_{\mu}^2 L_{\mu}^2}{\mu^2} \sum_{i=1}^K \mathbb{E}\left[\|\bar{\bm{\theta}}^{\tau}-\bm{\theta}_i^{\tau}\|^2\right] + \frac{8 G_{\mu}^2 \sigma_{\mu}^2 (L_{\mu}^2+1) K}{\mu^2 B} \\
&\leq \frac{8 G_{\mu}^2 L_{\mu}^2}{\mu^2} \mathbb{E}\left[\|\bm{\theta}^{\tau}(\bm{I}-\bm{J})\|_F^2\right] + \frac{8 G_{\mu}^2 \sigma_{\mu}^2 (L_{\mu}^2+1) K}{\mu^2},
\end{align}
where we used Assumptions 1-3 and defined the constants $\sigma_{\mu} =\max\{\sigma_1, \sigma_2,\sigma_3(\mu)\}$, $G_{\mu}=\max\{G_1,G_2(\mu)\}$ and $L_{\mu} = \max\{L_F, L_1, L_2(\mu)\}$. Going back to (\ref{lem2main0}), and using (\ref{lem2main1}), we can write
\begin{align}\label{lem2main3}
\nonumber &\frac{1}{KT} \sum_{t=1}^T \mathbb{E}\left[\left\|\sum_{\tau=0}^{t-1} \left(\bm{U}^{\tau}-\nabla \bm{F}^{\tau} \right) \bm{B}_{\tau,t}\right\|_F^2\right]\\   
\nonumber & \leq \frac{8 \sqrt{\rho} G_{\mu}^2 L_{\mu}^2}{\mu^2 (1-\sqrt{\rho})} \frac{1}{KT} \sum_{t=1}^T \sum_{\tau=0}^{t-1} \rho^{\frac{t-\tau}{2}} \mathbb{E}\left[\|\bm{\theta}^{\tau}(\bm{I}-\bm{J})\|_F^2\right]+ \frac{8 \sqrt{\rho} G_{\mu}^2 \sigma_{\mu}^2 (L_{\mu}^2+1)}{\mu^2 (1-\sqrt{\rho}) T} \sum_{t=1}^T \sum_{\tau=0}^{t-1} \rho^{\frac{t-\tau}{2}}\\
\nonumber & \leq \frac{8 \sqrt{\rho} G_{\mu}^2 L_{\mu}^2}{\mu^2 (1-\sqrt{\rho})} \frac{1}{KT} \sum_{t=1}^T \mathbb{E}\left[\|\bm{\theta}^t(\bm{I}-\bm{J})\|_F^2\right]  \sum_{\tau=0}^{T-t} \rho^{\frac{\tau}{2}} + \frac{8 \sqrt{\rho} G_{\mu}^2 \sigma_{\mu}^2 (L_{\mu}^2+1)}{\mu^2 (1-\sqrt{\rho})  T} \sum_{t=1}^T \sum_{\tau=0}^{T-t} \rho^{\frac{\tau}{2}}\\
& \leq \frac{8 \rho G_{\mu}^2 L_{\mu}^2}{\mu^2 (1-\sqrt{\rho})^2} \frac{1}{KT} \sum_{t=1}^T \mathbb{E}\left[\|\bm{\theta}^t(\bm{I}-\bm{J})\|_F^2\right] + \frac{8 \rho G_{\mu}^2 \sigma_{\mu}^2 (L_{\mu}^2+1)}{\mu^2 (1-\sqrt{\rho})^2}.
\end{align}
Now, we focus on the second term of (\ref{lem2eq2}). Following similar steps as when bounding the first term of (\ref{lem2eq2}), we get
\begin{align}\label{001}
\mathbb{E}\left[\left\|\sum_{\tau=0}^{t-1} \nabla \bm{F}^{\tau} \bm{B}_{\tau,t}\right\|_F^2\right] \leq \frac{\sqrt{\rho}}{1-\sqrt{\rho}} \sum_{\tau=0}^{t-1} \rho^{\frac{t-\tau}{2}} \mathbb{E}\left[\left\|\nabla \bm{F}^{\tau}\right\|_F^2\right].
\end{align}
Next, we look for an upper bound for the term $\mathbb{E}\left[\left\|\nabla \bm{F}^{\tau}\right\|_F^2\right]$. To this end, we start by writing
\begin{align}\label{002}
\nonumber &\mathbb{E}\left[\left\|\nabla \bm{F}^{\tau}\right\|_F^2\right]\\
\nonumber &= \frac{1}{\mu^2} \sum_{i=1}^K \mathbb{E}\left[\left\|\exp\left(\frac{f_i(\bm{\theta}_i^{\tau})}{\mu}\right)\nabla f_i(\bm{\theta}_i^{\tau})\right\|^2\right]\\
\nonumber &\leq \frac{(G_2(\mu))^2}{\mu^2} \mathbb{E}\left[\left\|\nabla f_i(\bm{\theta}_i^{\tau})\right\|^2\right]\\
\nonumber &\leq \frac{G1^2 (G_2(\mu))^2 K}{\mu^2} \\
&\leq \frac{G_{\mu}^4 K}{\mu^2}.
\end{align}
Therefore, we get
\begin{align}\label{lem2main2}
\frac{1}{KT} \sum_{t=1}^T \mathbb{E}\left[\left\|\sum_{\tau=0}^{t-1} \nabla \bm{F}^{\tau} \bm{B}_{\tau,t}\right\|_F^2\right] \leq \frac{G_{\mu}^4 \rho}{\mu^2 (1-\sqrt{\rho})^2}.
\end{align}
Next, using (\ref{lem2eq2}), (\ref{lem2main3}), and (\ref{lem2main2}), we can write
\begin{align}
\frac{1}{K T} \sum_{t=1}^T \mathbb{E}\left[\|\bm{\theta}^t(\bm{I}-\bm{J})\|_F^2\right] \leq \frac{16 \eta^2 \rho L_{\mu}^2 G_{\mu}^2}{\mu^2 (1-\sqrt{\rho})^2} \frac{1}{K T} \sum_{t=1}^T \mathbb{E}\left[\|\bm{\theta}^t(\bm{I}-\bm{J})\|_F^2\right]+\frac{2 \eta^2 \rho G_{\mu}^2 [8\sigma_{\mu}^2(L_{\mu}^2+1)+ G_{\mu}^2]}{\mu^2(1-\sqrt{\rho})^2}.
\end{align}
Let $\gamma_{\mu} = \frac{16 \eta^2 \rho L_{\mu}^2 G_{\mu}^2}{\mu^2 (1-\sqrt{\rho})^2}$, then we obtain
\begin{align}
\frac{1}{K T} \sum_{t=1}^T \mathbb{E}\left[\|\bm{\theta}^t(\bm{I}-\bm{J})\|_F^2\right]\leq \frac{2 \eta^2 \rho G_{\mu}^2 [8\sigma_{\mu}^2(L_{\mu}^2+1)+ G_{\mu}^2]}{\mu^2 (1-\gamma_{\mu}) (1-\sqrt{\rho})^2}, 
\end{align}
which concludes the proof of Lemma \ref{lemma2}.
\subsection{Proof of Theorem \ref{thm1}}\label{proofthm1}
Since the objective function $F(\cdot)$ has a Lipschitz gradient, we can write
\begin{align}
    F(\bar{\bm{\theta}}^{t+1}) - F(\bar{\bm{\theta}}^{t})
    \leq \langle {\nabla F(\bar{\bm{\theta}}^{t}}), {\bar{\bm{\theta}}^{t+1}}-\bar{\bm{\theta}}^{t} \rangle + \frac{L_{\mu}}{2} \|\bar{\bm{\theta}}^{t+1}-\bar{\bm{\theta}}^{t}\|^2. 
\end{align}
Plugging the update rule $\bar{\bm{\theta}}^{t+1} = \bar{\bm{\theta}}^{t} - \eta \bm{U}^{t}\bm{1}/K$, we have
\begin{align}\label{eqqq0}
    F(\bar{\bm{\theta}}^{t+1}) - F(\bar{\bm{\theta}}^{t})
    &\leq -\eta \langle \nabla F(\bar{\bm{\theta}}^{t}), \frac{\bm{U}^{t}\bm{1}}{K} \rangle + \frac{\eta^2L_{\mu}}{2}\left\|\frac{\bm{U}^{t}\bm{1}}{K}\right\|^2.
\end{align}
Using the identity (\ref{id2}), we can write the first term of the left hand-side of (\ref{eqqq0}) as
\begin{align}\label{eqq1}
\langle \nabla F(\bar{\bm{\theta}}^{t}), \frac{\bm{U}^{t}\bm{1}}{K} \rangle = \frac{1}{2} \left[\|\nabla F(\bar{\bm{\theta}}^{t})\|^2 + \left\|\frac{\bm{U}^{t}\bm{1}}{K}\right\|^2 - \left\|F(\bar{\bm{\theta}}^{t})-\frac{\bm{U}^{t}\bm{1}}{K} \right\|^2\right].
\end{align}
Next, we look for an upper bound for the third term of (\ref{eqq1}). To this end, we start by writing
\begin{align}
    \nonumber &\nabla F(\bar{\bm{\theta}}^{t}) - \frac{\bm{U}^{t}\bm{1}}{K}\\
    \nonumber & = \frac{1}{\mu K} \sum_{i=1}^K \left\{ \exp\left(\frac{f_i(\bar{\bm{\theta}}^{t})}{\mu}\right)\nabla f_i(\bar{\bm{\theta}}^{t})-h(\bm{\theta}_i^{t};\mu) g_i(\bm{\theta}_i^{t})\right\}\\
    \nonumber & = \frac{1}{\mu K} \sum_{i=1}^K \left\{ \exp\left(\frac{f_i(\bar{\bm{\theta}}^{t})}{\mu}\right)\nabla f_i(\bar{\bm{\theta}}^{t})- h(\bar{\bm{\theta}}^{t};\mu) \nabla f_i(\bar{\bm{\theta}}^{t})\right\} + \frac{1}{\mu K} \sum_{i=1}^K \left\{ h(\bar{\bm{\theta}}^{t};\mu) \nabla f_i(\bar{\bm{\theta}}^{t})-h(\bm{\theta}_i^{t};\mu) \nabla f_i(\bar{\bm{\theta}}^{t})\right\}\\
    &+ \frac{1}{\mu K} \sum_{i=1}^K \left\{ h(\bm{\theta}_i^{t};\mu) \nabla f_i(\bar{\bm{\theta}}^{t})-h(\bm{\theta}_i^{t};\mu) g_i(\bar{\bm{\theta}}^{t})\right\}+ \frac{1}{\mu K} \sum_{i=1}^K \left\{ h(\bm{\theta}_i^{t};\mu) g_i(\bar{\bm{\theta}}^{t})-h(\bm{\theta}_i^{t};\mu) g_i(\bm{\theta}_i^{t})\right\}.
\end{align}
Using the inequality (\ref{id1}), we get
\begin{align}
 \nonumber &\left\|\nabla F(\bar{\bm{\theta}}^{t}) - \frac{\bm{U}^{t}\bm{1}}{K}\right\|^2\\
    \nonumber &\leq\frac{4}{\mu^2 K} \sum_{i=1}^K \left\|\left(\exp\left(\frac{f_i(\bar{\bm{\theta}}^{t})}{\mu}\right) -h(\bar{\bm{\theta}}^{t};\mu)\right) \nabla f_i(\bar{\bm{\theta}}^{t})\right\|^2+\frac{4}{\mu^2 K} \sum_{i=1}^K \left\|\left(h(\bar{\bm{\theta}}^{t};\mu) -h(\bm{\theta}_i^{t};\mu)\right) \nabla f_i(\bar{\bm{\theta}}^{t})\right\|^2\\
    &+\frac{4}{\mu^2 K} \sum_{i=1}^K \left\|h(\bm{\theta}_i^{t};\mu) \left(\nabla f_i(\bar{\bm{\theta}}^{t})- g_i(\bar{\bm{\theta}}^{t})\right)\right\|^2
    +\frac{4}{\mu^2 K} \sum_{i=1}^K \left\|h(\bm{\theta}_i^{t};\mu) \left( g_i(\bar{\bm{\theta}}^{t})- g_i(\bm{\theta}_i^{t})\right)\right\|^2.
\end{align}
Using assumption 2 and taking the expected value from both sides, we can write
\begin{align}
     \nonumber &\mathbb{E}\left[\left\|\nabla F(\bar{\bm{\theta}}^{t}) - \frac{\bm{U}^{t}\bm{1}}{K}\right\|^2\right]\\
     \nonumber &\leq \frac{4 G_1^2}{\mu^2 K}  \sum_{i=1}^K \underbrace{\mathbb{E}\left[\left|\exp\left(\frac{f_i(\bar{\bm{\theta}}^{t})}{\mu}\right)-h(\bar{\bm{\theta}}^{t};\mu) \right|^2\right]}_\text{$T_1$}+\frac{4 G_1^2}{\mu^2 K} \sum_{i=1}^K \underbrace{\mathbb{E}\left[\left|h(\bar{\bm{\theta}}^{t};\mu) -h(\bm{\theta}_i^{t};\mu)\right|^2\right]}_\text{$T_2$}\\
     &+\frac{4 (G_2(\mu))^2}{\mu^2 K} \sum_{i=1}^K \underbrace{\mathbb{E}\left[\left\|\nabla f_i(\bar{\bm{\theta}}^{t})-g_i(\bar{\bm{\theta}}^{t})\right\|^2\right]}_\text{$T_3$}+\frac{4 (G_2(\mu))^2}{\mu^2 K} \sum_{i=1}^K \underbrace{\mathbb{E}\left[\left\|g_i(\bar{\bm{\theta}}^{t})- g_i(\bm{\theta}_i^{t})\right\|^2\right]}_\text{$T_4$}.
\end{align}
We start by bounding the term $T_1$
\begin{align}
    \nonumber T_1 &= \mathbb{E}\left[\left|\exp\left(\frac{f_i(\bar{\bm{\theta}}^{t})}{\mu}\right)-h(\bar{\bm{\theta}}^{t};\mu) \right|^2\right]\\
    \nonumber & \stackrel{(\ref{as1})}{\leq} L_{\mu}^2 \mathbb{E}\left[\left|\frac{1}{B} \sum_{i=1}^B \frac{\ell(\bar{\bm{\theta}}^{t}, \xi_i^t)}{\mu}- \frac{f_i(\bar{\bm{\theta}}^{t})}{\mu}\right|^2\right]\\
    \nonumber &= \frac{L_{\mu}^2}{B^2 \mu^2}  \mathbb{E}\left[\left|\sum_{i=1}^B \left(\ell(\bar{\bm{\theta}}^{t}, \xi_i^t)- f_i(\bar{\bm{\theta}}^{t}) \right)\right|^2\right]\\
    \nonumber & = \frac{L_{\mu}^2}{B^2 \mu^2} \mathbb{E}\left[\left|\sum_{i=1}^B X_i^t 
    \right|^2\right] \\
    & = \frac{L_{\mu}^2}{B^2 \mu^2} \left( \sum_{i=1}^B \mathbb{E}\left[\left| X_i^t 
    \right|^2\right] + \sum_{j \neq i} \mathbb{E}\left[\langle X_i^t, X_j^t 
    \rangle\right] \right),
\end{align}
where $X_i^t = \ell(\bar{\bm{\theta}}^{t}, \xi_i^t)- f_i(\bar{\bm{\theta}}^{t})$. Since $\xi_i^t$ and $\xi_j^t$ are independent for $j \neq i$, then $\mathbb{E}\left[\langle X_i^t, X_j^t 
    \rangle\right] = 0$. Then, using (\ref{ff3}), we can write that
    \begin{align}
        T_1  = \frac{L_{\mu}^2}{B^2 \mu^2} \sum_{i=1}^B \mathbb{E}\left[\left| X_i^t 
    \right|^2\right] \leq \frac{L_{\mu}^2}{B^2 \mu^2} (B \sigma_{\mu}^2) = \frac{L_{\mu}^2 \sigma_{\mu}^2}{B \mu^2}.
    \end{align}
Similarly, we can bound the term $T_3$ as $T_3 \leq \frac{\sigma_{\mu}^2}{B}$. Next, we bound the term $T_2$ as
\begin{align}
    \nonumber T_2 &= \mathbb{E}\left[\left|h(\bar{\bm{\theta}}^{t};\mu) -h(\bm{\theta}_i^{t};\mu)\right|^2\right]\\
    \nonumber & = \mathbb{E}\left[\left|\exp\left(\frac{1}{B} \sum_{j=1}^B \frac{\ell(\bar{\bm{\theta}}^{t}, \xi_j^t)}{\mu}\right)-\exp\left(\frac{1}{B} \sum_{j=1}^B \frac{\ell(\bm{\theta}_i^{t}, \xi_j^t)}{\mu}\right)\right|^2\right]\\
    \nonumber & \stackrel{(\ref{as1})}{\leq} L^2 \mathbb{E}\left[\left|\frac{1}{B} \sum_{j=1}^B \frac{\ell(\bar{\bm{\theta}}^{t}, \xi_j^t)}{\mu}-\frac{1}{B} \sum_{j=1}^B \frac{\ell(\bm{\theta}_i^{t}, \xi_j^t)}{\mu}\right|^2\right] \\
    \nonumber & = \frac{L_{\mu}^2}{B^2 \mu^2} \mathbb{E}\left[\left|\sum_{j=1}^B ( \ell(\bar{\bm{\theta}}^{t}, \xi_j^t)- \ell(\bm{\theta}_i^{t}, \xi_j^t))\right|^2\right]\\
     \nonumber & \leq \frac{L_{\mu}^2}{B \mu^2} \sum_{j=1}^B  \mathbb{E}\left[\left|\ell(\bar{\bm{\theta}}^{t}, \xi_j^t)- \ell(\bm{\theta}_i^{t}, \xi_j^t)\right|^2\right] \\
    & \stackrel{\mathrm{(a)}}{\leq} \frac{L_{\mu}^2 G_{\mu}^2}{\mu^2}  \mathbb{E}\left[\left\|\bar{\bm{\theta}}^{t}-\bm{\theta}_i^{t}\right\|^2\right],
\end{align}
where we have used the fact that (\ref{as000}) gives that $\ell(\bar{\bm{\theta}}^{t}, \xi_j^t)$ is $G$-Lipschitz continuous in $\mathrm{(a)}$ and (\ref{ff0}) in the last inequality. Finally, using (\ref{ff1}), we can bound $T_4$ as 
$T_4 \leq L_{\mu}^2 \mathbb{E}\left[\|\bar{\bm{\theta}}^{t}-\bm{\theta}_i^{t}\|^2\right]$. Using the bounds on the terms $T_1$-$T_4$, we can write
\begin{align}\label{eqq2}
     \mathbb{E}\left[\left\|\nabla F(\bar{\bm{\theta}}^{t}) - \frac{\bm{U}^{t}\bm{1}}{K}\right\|^2\right]\leq \frac{4 G_{\mu}^2 \sigma_{\mu}^2 (L_{\mu}^2 + \mu^2)}{\mu^4 B} +  \frac{4 G_{\mu}^2 (G_{\mu}^2+\mu^2) L_{\mu}^2}{\mu^4 K} \sum_{i=1}^K \mathbb{E}\left[\|\bar{\bm{\theta}}^{t}-\bm{\theta}_i^{t}\|^2\right].
\end{align}
Replacing (\ref{eqq2}) in (\ref{eqq1}), we obtain
\begin{align}
\mathbb{E}\left[\langle \nabla F(\bar{\bm{\theta}}^{t}), \frac{\bm{U}^{t}\bm{1}}{K} \rangle\right]\geq \frac{1}{2} \mathbb{E}\left[\|\nabla F(\bar{\bm{\theta}}^{t})\|^2\right] + \frac{1}{2} \mathbb{E}\left[\left\|\frac{\bm{U}^{t}\bm{1}}{K}\right\|^2\right]-\frac{2 G_{\mu}^2 \sigma_{\mu}^2 (L_{\mu}^2 + \mu^2)}{\mu^4 B}-\frac{2 G_{\mu}^2 (G_{\mu}^2+\mu^2) L_{\mu}^2}{\mu^4 K}  \mathbb{E}\left[\|\bm{\theta}^{t}(\bm{I}-\bm{J})\|_F^2\right].
\end{align}
Going back to (\ref{eqqq0}), we can write
\begin{align}
    \nonumber &\mathbb{E}\left[ F(\bar{\bm{\theta}}^{t+1}) - F(\bar{\bm{\theta}}^{t})\right]\\
    &\leq - \frac{\eta}{2} \mathbb{E}\left[\|\nabla F(\bar{\bm{\theta}}^{t})\|^2\right] + \frac{\eta}{2} \left(\eta L_{\mu}-1\right) \mathbb{E}\left[\left\|\frac{\bm{U}^{t}\bm{1}}{K}\right\|^2\right] +\frac{2 G_{\mu}^2 \sigma_{\mu}^2 \eta (L_{\mu}^2 + \mu^2)}{\mu^4 B}+\frac{4 G_{\mu}^2 L_{\mu}^2 \eta (G_{\mu}^2+\mu^2)}{\mu^4 K}  \mathbb{E}\left[\|\bm{\theta}^{t}(\bm{I}-\bm{J})\|_F^2\right].
\end{align}
Setting the learning rate $\eta$ such that $\eta L_{\mu} \leq 1$, and taking the average over $t \in [1,T]$, we get
\begin{align}\label{main0}
    \frac{\mathbb{E}\left[ F(\bar{\bm{\theta}}^{T}) - F(\bar{\bm{\theta}}^{1})\right]}{T}\leq - \frac{\eta}{2 T} \sum_{t=1}^T \mathbb{E}\left[\|\nabla F(\bar{\bm{\theta}}^{t})\|^2\right] + \frac{2 \eta G_{\mu}^2 \sigma_{\mu}^2 (L_{\mu}^2 + \mu^2) }{\mu^4 B} + \frac{2 \eta G_{\mu}^2 L_{\mu}^2 (G_{\mu}^2+\mu^2)}{\mu^4 K T} \sum_{t=1}^T \mathbb{E}\left[\|\bm{\theta}^{t}(\bm{I}\!-\!\bm{J})\|_F^2\right].
\end{align}
Re-arranging the terms and using assumption 5, we can write
\begin{align}\label{main1}
    \frac{1}{T} \sum_{t=1}^T \mathbb{E}\left[\|\nabla F(\bar{\bm{\theta}}^{t})\|^2\right] \leq \frac{2 (F(\bar{\bm{\theta}}^{1})-F_{inf})}{\eta T} + \frac{2 G_{\mu}^2 \sigma_{\mu}^2 (L_{\mu}^2 + \mu^2) }{\mu^4 B} + \frac{2 G_{\mu}^2 L_{\mu}^2 (G_{\mu}^2+\mu^2)}{\mu^4 K T} \sum_{t=1}^T \mathbb{E}\left[\|\bm{\theta}^{t}(\bm{I}-\bm{J})\|_F^2\right].
\end{align}
Using Lemma \ref{lemma2} in (\ref{main1}), we get
\begin{align}
    \frac{1}{T} \sum_{t=1}^T \mathbb{E}\left[\|\nabla F(\bar{\bm{\theta}}^{t})\|^2\right] \leq \frac{2 (F(\bar{\bm{\theta}}^{1})-F_{inf})}{\eta T} +  \frac{2 G_{\mu}^2 \sigma_{\mu}^2 (L_{\mu}^2 + \mu^2) }{\mu^4 B} + \frac{4 \eta^2 L_{\mu}^2 \rho G_{\mu}^2 (G_{\mu}^2+\mu^2) [8\sigma_{\mu}^2(L_{\mu}^2+1)+ G_{\mu}^2]}{\mu^8 (1-\gamma) (1-\sqrt{\rho})^2}.
\end{align}
Furthermore, choosing $\eta$ such that $\eta L_{\mu} < \frac{\mu (1-\sqrt{\rho})}{8 G_{\mu} \sqrt{\rho}}$ ensures that $\gamma_{\mu} < \frac{1}{4}$, then we can write
\begin{align}
   \frac{1}{T} \sum_{t=1}^T \mathbb{E}\left[\|\nabla F(\bar{\bm{\theta}}^{t})\|^2\right] \leq \frac{2 (F(\bar{\bm{\theta}}^{1})-F_{inf})}{\eta T}+ \frac{2 G_{\mu}^2 \sigma_{\mu}^2(L_{\mu}^2 + \mu^2)}{\mu^4 B}+\frac{16 \eta^2 L_{\mu}^2 \rho G_{\mu}^2 (G_{\mu}^2+\mu^2) [8\sigma_{\mu}^2(L_{\mu}^2+1)+ G^2]}{3 \mu^8 (1-\sqrt{\rho})^2},
\end{align}
which finalizes the proof.

\end{document}